\title{Bounded-Distortion Metric Learning}
\author{
Renjie Liao, Jianping Shi \\
The Chinese University of Hong Kong \\
\texttt{rjliao,jpshi@cse.cuhk.edu.hk} \\
\And
Ziyang Ma \\
University of Chinese Academy of Sciences \\
\texttt{maziyang08@gmail.com} \\
\AND
Jun Zhu \\
Tsinghua University \\
\texttt{dcszj@mail.tsinghua.edu.cn} \\
\And
Jiaya Jia \\
The Chinese University of Hong Kong \\
\texttt{leojia@cse.cuhk.edu.hk} \\
}
\newtheorem{theorem}{Theorem}
\newtheorem{lemma}{Lemma}
\newtheorem{proposition}{Proposition}
\theoremstyle{definition}
\newtheorem{definition}{Definition}
\theoremstyle{remark}
\newtheorem*{remark}{Remark}
\newcommand{\ie}{\textit{i}.\textit{e}.}
\newcommand{\eg}{\textit{e}.\textit{g}.}
\begin{document}

\maketitle

\begin{abstract}
Metric learning aims to embed one metric space into another to benefit tasks like classification and clustering. Although a greatly distorted metric space has a high degree of freedom to fit training data, it is prone to overfitting and numerical inaccuracy. This paper presents {\it bounded-distortion metric learning} (BDML), a new metric learning framework which amounts to finding an optimal Mahalanobis metric space with a bounded-distortion constraint. An efficient solver based on the multiplicative weights update method is proposed. Moreover, we generalize BDML to pseudo-metric learning and devise the semidefinite relaxation and a randomized algorithm to approximately solve it. We further provide theoretical analysis to show that distortion is a key ingredient for stability and generalization ability of our BDML algorithm. Extensive experiments on several benchmark datasets yield promising results.
\end{abstract}

\section{Introduction}\label{sect:introduction}

Distance metric learning is a fundamental problem in machine learning, since many learning algorithms, \eg, k-nearest neighbors (kNN) and k-means, crucially rely on a ``good'' metric. The criteria of good metrics may differ in various learning tasks. For instance, in supervised learning, a common criterion is to learn a metric with a low empirical error~\cite{weinberger2009distance}, while in unsupervised learning, a good criterion is to learn a metric that minimizes the intra-cluster distance and simultaneously maximizes the inter-cluster distance~\cite{xing2003distance}.

In essence, metric learning aims to search a metric embedding to convert the original metric space (\eg, Euclidean) into a new one, which better suits learning tasks with regard to the above criteria. Such an embedding intrinsically induces distortion - a concept in the theory of metric embedding~\cite{bourgain1985lipschitz}, which intuitively measures the effort to reshape the metric space.

Although a large-distorted metric space can have a high degree of freedom to describe data, it may be prone to overfitting. In fact, we will theoretically validate the intuition later by proving that in our case of Mahalanobis metric learning, the generalization bound depends increasingly on the distortion. Moreover, the numerical inaccuracy would also be a problem if the distortion is extremely large. We will show that the distortion of Mahalanobis metric learning is the condition number of the parameter matrix. Inevitably, a large distortion would make the matrix ill-conditioned.

In order to balance the fitness of the learned the metric space to training data and the distortion of the underlying metric embedding, we present {\it bounded-distortion metric learning} (BDML), a generic framework that imposes a bounded-distortion constraint to the learning objective. While it fits various metric learning objectives, we concentrate on learning Mahalanobis metric space, which leads to a semidefinite programming (SDP) formulation.

We approach the SDP via a bisection method, which involves solving a sequence of convex feasibility problems with fast multiplicative weights update~\cite{kale2007efficient}. Moreover, to deal with the pseudo-metric learning, we apply the spectral decomposition to the parameter matrix and perform joint learning of dimension reduction mapping and metric. We relax the resultant non-convex quadratic constrained quadratic programming (QCQP) to a SDP and achieve the approximation by a randomized algorithm. Theoretical analysis is provided to reveal that distortion has a direct impact on the stability and generalization ability of a class of metric learning algorithms. Experimental results on several benchmark datasets manifest the usefulness of our BDML.

\section{Related Work}\label{sect:relatedWork}

Metric learning algorithms can be categorized according to different criteria, such as Mahalanobis~\cite{xing2003distance,shalev2004online,bar2005learning} and non-Mahalanobis~\cite{kedem2012non,norouzi2012hamming,hauberg2012geometric} methods; probabilistic~\cite{goldberger2004neighbourhood,der2012latent} and non-probabilistic~\cite{jin2009regularized,shen2009positive} methods; unsupervised~\cite{roweis2000nonlinear}, supervised~\cite{ying2012distance} and semi-supervised~\cite{xing2003distance} methods; and global~\cite{globerson2005metric} and local~\cite{hastie1996discriminant} methods.

Based on the type of constraints, we can also classify them into pairwise and triplet-wise ones. Pairwise methods~\cite{xing2003distance,davis2007information} often adds constraints to enforce distances between pairs of dissimilar points are larger than a given threshold. Representative methods in the triplet group are the large-margin nearest neighbor~\cite{weinberger2009distance} and its variants~\cite{kedem2012non}. They exploit the local triplet constraints to assure that the distance between any point and its different-class neighbour should be at least one unit margin further than the distance between it and its same-class neighbour. Intuitively, if these triplet constraints are well satisfied, the empirical loss of kNN would be small.

Metric learning is closely related to \emph{metric embedding}, an important topic in theoretical computer science that has played an important role to design approximation algorithms. One line of research focuses on how to embed a finite metric space into normed spaces with a low distortion~\cite{bourgain1985lipschitz,indyk2004low}, \ie, preserving the structure of the original metric space.
Metric learning is also related to manifold learning~\cite{hauberg2012geometric} and kernel learning~\cite{lanckriet2004learning}. Learning a distance metric function amounts to learning a kernel function that measures the similarity between points.


\section{Bounded-Distortion Metric Learning}\label{sect:model}

Before going to details, we introduce necessary notations. $\mathbb{S}^{d} = \{ M | M \in \mathbb{R}^{d \times d}, M^{\top} = M \}$ is the space of all $d \times d$ real symmetric matrices equipped with the Frobenius inner product $A \bullet B = \mathbf{Tr}(A^{\top} B)$. The positive semidefinite (PSD) cone and positive definite (PD) cone are denoted as $\mathbb{S}_{+}^{d} = \{M | M \in \mathbb{S}^{d}, M \succeq 0 \}$ and $\mathbb{S}_{++}^{d} = \{M | M \in \mathbb{S}^{d}, M \succ 0 \}$ respectively. The convex set $\mathbb{P}_{R}^{d} = \{M | M \in \mathbb{S}_{++}^{d}, \mathbf{Tr}(M) \le R\}$ is also used. The trace bound $R$ in $\mathbb{P}_{R}^{d}$ is a parameter to ensure a bounded domain for $M$. $\mathbb{R}_{+}^{d}$ denotes the $d$-dimensional nonnegative orthant.

\subsection{Distortion of Metric Embedding}\label{sect:embedding}

A definition of metric space is the following.

\begin{definition}\label{metricSpace}
A pair $(\mathcal{X}, d_{\mathcal{X}})$ is called a metric space, where $\mathcal{X}$ is a set of points and $d_{\mathcal{X}} :\mathcal{X} \times \mathcal{X} \to [0, \infty)$ is a distance function satisfying the following conditions for all $x_i, x_j, x_k \in \mathcal{X}$:
\begin{itemize}
\item $d_{\mathcal{X}}(x_i, x_j) = 0$, iff $x_i = x_j$,
\item $d_{\mathcal{X}}(x_i, x_j) = d_{\mathcal{X}}(x_j, x_i)$,
\item $d_{\mathcal{X}}(x_i, x_j) + d_{\mathcal{X}}(x_j, x_k) \ge d_{\mathcal{X}}(x_i, x_k)$.
\end{itemize}
\end{definition}

\begin{figure}[t]
  \centering
    \begin{tabular}{@{\hspace{1mm}}c@{\hspace{.5cm}}c@{\hspace{.5cm}}c@{\hspace{1mm}}}
    \subfloat[]{\includegraphics[height=2.6cm]{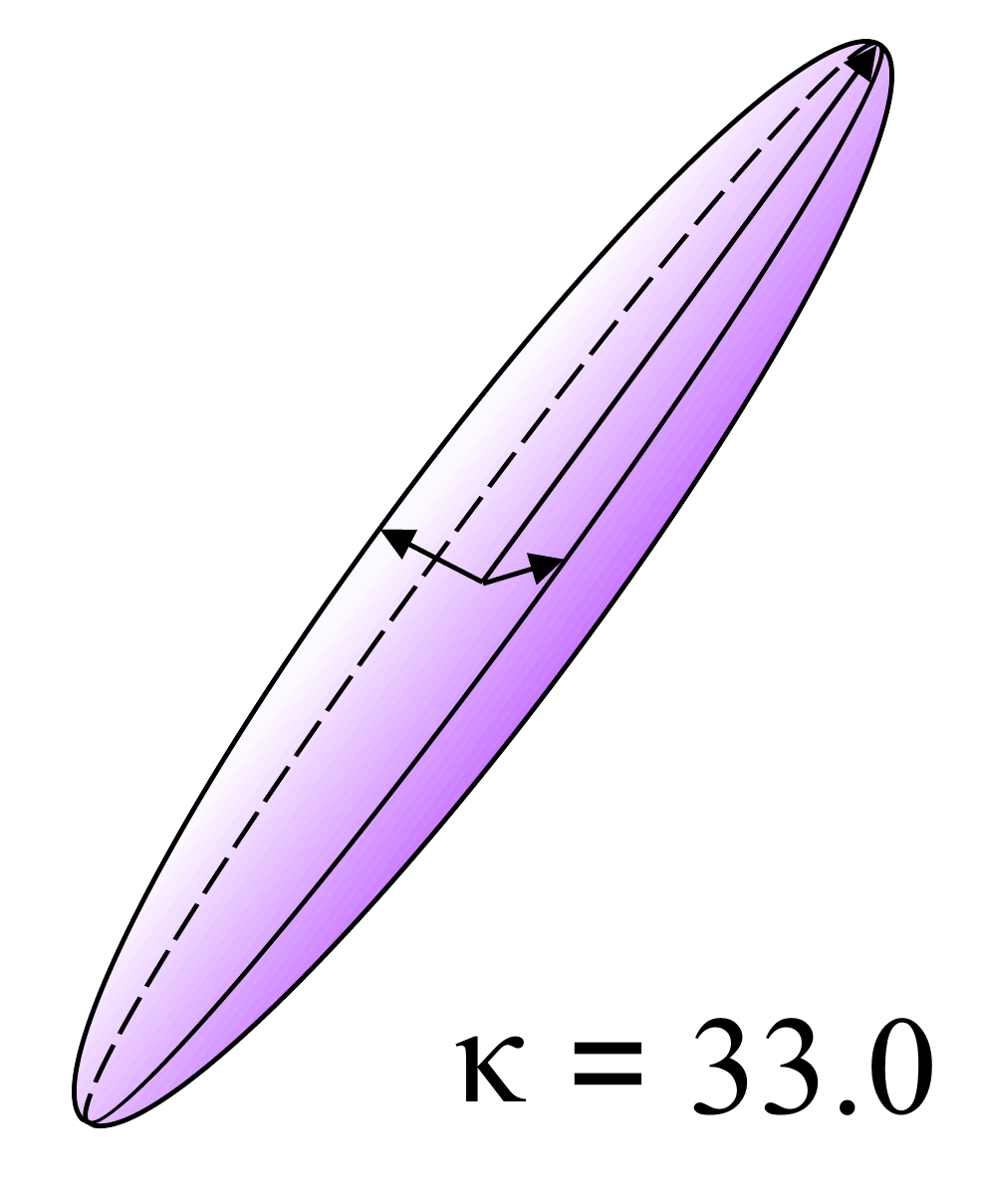}} &
    \subfloat[]{\includegraphics[height=2.35cm]{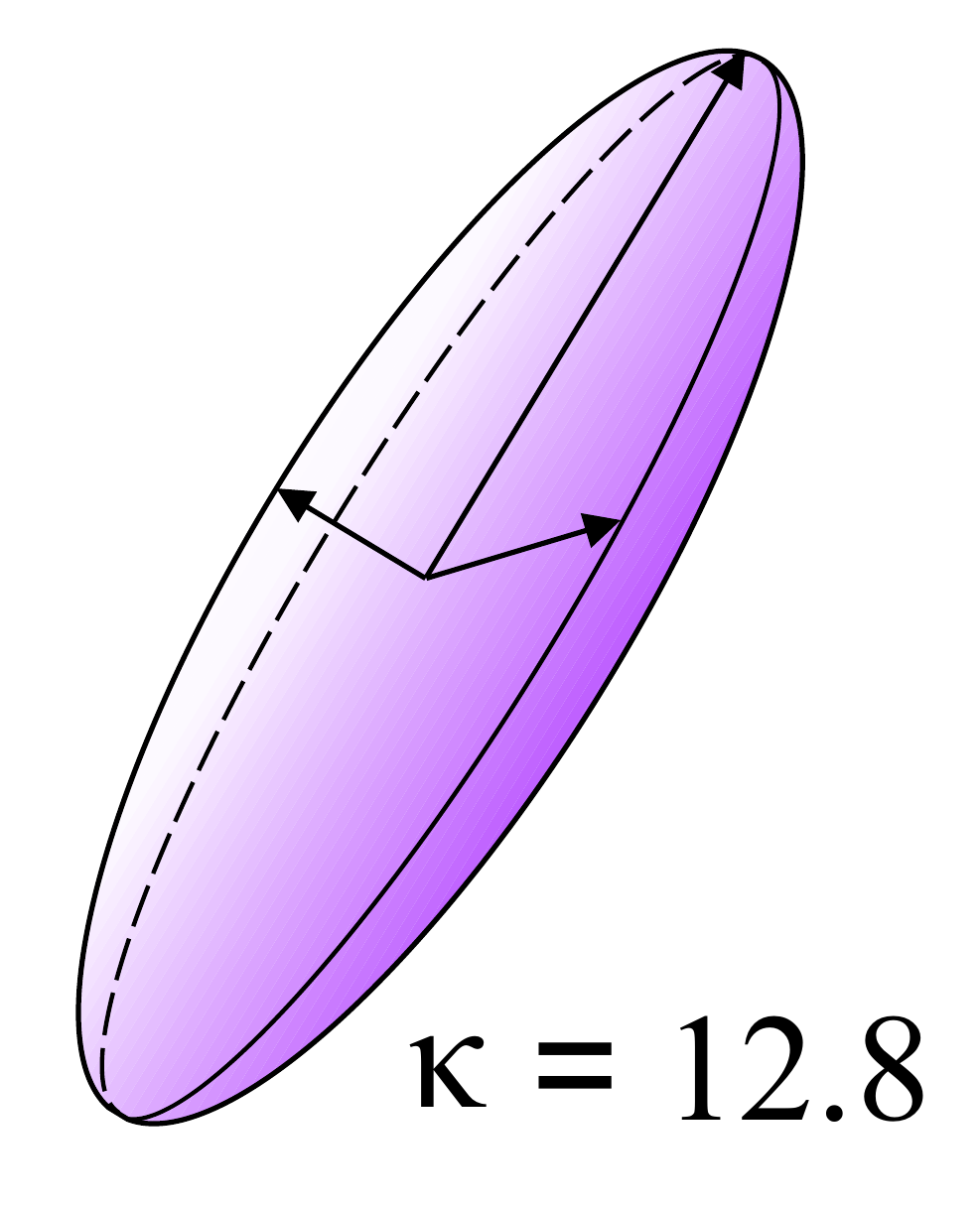}} &
    \subfloat[]{\includegraphics[height=2.1cm]{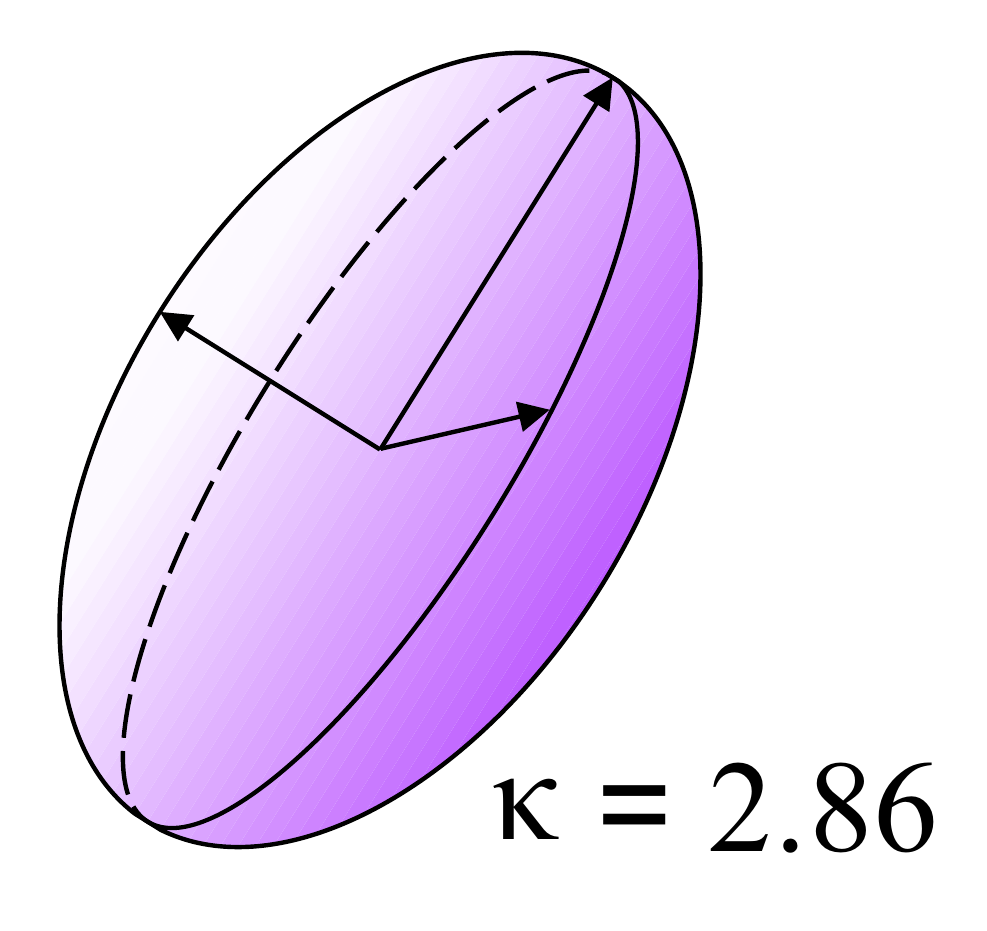}}
    \end{tabular}
  \caption{Ellipsoids with various condition numbers, where (a) and (c) have the same logDet value $\approx$ 3.5; (b) and (c) have the same F-norm value $\approx$ 7.2.}
  \label{fig:Visualization}
\end{figure}
A Mahalanobis metric space is a metric space equipped with a Mahalanobis distance function, which often takes the form of $d_{\mathcal{X}}(x_i,x_j) = \sqrt{{(x_i - x_j)^{\top}}M(x_i - x_j)}$, parameterized by a PD matrix $M$, \ie, $M \in \mathbb{S}_{++}^{d}$. In the metric learning literatures, a PSD $M$ is usually adopted, thus the induced distance function is a pseudo-metric in the strict sense. We now focus on the PD case and defer the PSD one to Sect.~\ref{sect:pseudo_metric}. Obviously, the Euclidean space is a special Mahalanobis metric space where $M$ is an identity matrix. Note that we deal with the squared Mahalanobis distance since it does not affect learning methods (\eg, kNN) that are based on relative distances.

We can embed one metric space into another with a certain degree of distortion. The formal definition of metric embedding and its distortion are as follows~\cite{bourgain1985lipschitz,indyk2004low},

\begin{definition}\label{embedding}
Let $(\mathcal{X}, d_{\mathcal{X}})$ and $(\mathcal{Y}, d_{\mathcal{Y}})$ be two metric spaces. A mapping $f :\mathcal{X} \to \mathcal{Y}$ is said to be a $c$-embedding if there exists $r > 0$ such that for all $x,y \in \mathcal{X}$,
\begin{equation}\label{eq:embedding}
r \cdot d_{\mathcal{X}}(x,y) \le d_{\mathcal{Y}}(f(x),f(y)) \le cr \cdot d_{\mathcal{X}}(x,y).
\end{equation}
The distortion of $f$ is defined as the infimum of all $c$ such that $f$ is a $c$-embedding.
\end{definition}

Distortion is a measure of the distance between two metric spaces and plays an important role in the theory of metric embeddings. Later we will show that distortion is essential to stabilize a class of metric learning algorithms.

In Mahalanobis metric learning, given an Euclidean metric space $(\mathcal{X}, d_{I})$, we learn a metric embedding $f_{I \to M}$, which returns us a desired Mahalanobis metric space $(\mathcal{X}, d_{M})$. We have the following proposition to specify the distortion of this metric embedding.

\begin{proposition}\label{pro:distortion}
The distortion of the metric embedding $f_{I \to M}$ is the condition number $\kappa(M)$.
\end{proposition}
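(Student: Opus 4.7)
The plan is to reduce the bi-Lipschitz condition of Definition~\ref{embedding} to an extremal Rayleigh-quotient bound on $M$, and then read off $\kappa(M)$ from the Courant--Fischer characterization of the eigenvalues.

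First, I would specialize Definition~\ref{embedding} to $f_{I \to M} : (\mathcal{X}, d_I) \to (\mathcal{X}, d_M)$, noting that the embedding is the identity on points (only the distance function changes). Using the squared Mahalanobis distance convention that the paper explicitly adopts just above the proposition, so that $d_I^{2}(x,y) = (x-y)^\top(x-y)$ and $d_M^{2}(x,y) = (x-y)^\top M (x-y)$, the two-sided inequality \eqref{eq:embedding}, after setting $v = x - y$, becomes
\[
r \cdot v^\top v \;\le\; v^\top M v \;\le\; cr \cdot v^\top v \qquad \text{for all admissible } v.
\]

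Second, under the mild data-richness assumption that $\{x - y : x,y \in \mathcal{X}\}$ spans $\mathbb{R}^d$ (otherwise the embedding cannot see $M$ along unused directions, and the eigenvalues in those directions are immaterial), I would divide each side by $v^\top v$ and invoke the Courant--Fischer theorem on the real symmetric PD matrix $M$: $\min_{v \neq 0} v^\top M v / v^\top v = \lambda_{\min}(M)$ and $\max_{v \neq 0} v^\top M v / v^\top v = \lambda_{\max}(M)$, with both extrema attained at the corresponding eigenvectors. Since $M \succ 0$, $\lambda_{\min}(M) > 0$ so the division is legitimate.

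Third, the tightest admissible pair $(r, cr)$ making both inequalities hold simultaneously is $r = \lambda_{\min}(M)$ and $cr = \lambda_{\max}(M)$; any larger $r$ would violate the left inequality at the min-eigenvector, and any smaller $cr$ would violate the right inequality at the max-eigenvector. Hence the infimum over valid constants in Definition~\ref{embedding} is
\[
c \;=\; \frac{\lambda_{\max}(M)}{\lambda_{\min}(M)} \;=\; \kappa(M),
\]
as claimed.

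The argument is essentially a one-line computation, so there is no real obstacle; the only delicate points are (i) stating the richness assumption on $\mathcal{X}$ cleanly so that the eigenvectors of $M$ are realized as difference vectors, and (ii) being explicit that the squared Mahalanobis distance is used, since the non-squared version would yield $\sqrt{\kappa(M)}$ rather than $\kappa(M)$.
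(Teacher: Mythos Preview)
Your proposal is correct and follows essentially the same route as the paper's proof: both reduce the bi-Lipschitz condition to the Rayleigh-quotient bounds $\lambda_{\min}(M) \le (x-y)^\top M (x-y)/(x-y)^\top(x-y) \le \lambda_{\max}(M)$ and read off $\kappa(M)$ as the optimal ratio. Your version is slightly more careful than the paper's in two respects---you make explicit the spanning assumption on $\{x-y\}$ needed for the eigenvector extrema to be attained, and you flag the squared-distance convention---but the underlying argument is the same.
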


Due to the page limit, we focus on presenting important results in this paper and defer all proofs to the appendix in the supplementary file.

\subsection{Geometric Meaning of Distortion}\label{subsect:GeometryMeaning}

Distortion can be intuitively regarded as a complexity measure of metric embedding. From a geometric perspective, it possesses an intrinsically different meaning compared to other complexity measures in previous work, including the log determinant (logDet)~\cite{davis2007information} and Frobenius norm (F-norm)~\cite{jin2009regularized}. Specifically, we focus on analyzing the metric embedding $f_{I \to M}$ and consider an origin-centered ellipsoid $\mathcal{E} = \{x \in \mathbb{R}^d | x^{\top}Mx \le 1\}$ for simplicity.

Let $\{\lambda_i\}_{i=1}^d$ be the set of eigenvalues of $M$. It is well-known that the logarithmic volume of $\mathcal{E}$ is $\log(V(\mathcal{E})) = \log(\gamma) - \frac{1}{2}\text{logDet}(M)$, where $\gamma$ is the volume of the unit sphere in $\mathbb{R}^d$. The squared F-norm of $M$ is defined as $ \left\| M \right\|_F^2 = \sum\nolimits_{i = 1}^d {1/r_i^4}$, where $r_i = 1/\sqrt{\lambda_i}$ is the length of the $i$-th semi-axis. The condition number is $\kappa(M) = r_{max}^2 / r_{min}^2$. In other words, logDet measures the volume variation; F-norm indicates the change of overall lengths of semi-axes; while the condition number describes the length ratio variation between the longest and the shortest semi-axis. As illustrated in Fig.~\ref{fig:Visualization}, it is possible that a metric with a small logDet or F-norm value is ill-conditioned, \ie, the ellipsoid is extremely elongated. Therefore, rather than focusing on the absolute variation, distortion measures the relative one, thus enabling higher freedom and directly controling the well-conditioning property.

\subsection{Pair and Triplet Constrained BDML}\label{subsect:BDML}

Given a training dataset $D = \{(x_i, y_i)\}_{i=1}^{N}$, where $x_i \in \mathbb{R}^d$ is a data point and $y_i$ is the corresponding class label, our task is to obtain a Mahalnobis distance metric space $(\mathcal{X}, d_M)$, where $d_M$ is the distance function defined as, $d_M(x_i, x_j) = (x_i - x_j)^{\top}M(x_i - x_j) = M \bullet {X_{ij}}$, and $M \in \mathbb{S}_{++}^d$. Here we define $X_{ij} = (x_i - x_j)(x_i - x_j)^{\top}$ for notation simplicity.

We now present two notable formulations of our bounded-distortion metric learning (BDML), which correspond to two types of constraints in the literature of metric learning, \ie, pairwise ones and triplet-wise ones. Specifically, for any data point $x_i$, we consider its $k$-nearest neighbours set $\Omega_i$. Following~\cite{weinberger2009distance}, two types of neighbor points are distinguished. They are \emph{target neighbors} that share the same class label with $x_i$, and \emph{imposter neighbors} that have different class labels with $x_i$.

Let $\mathcal{S} = \{ (i, j) | x_j \in \Omega_i, y_j = y_i \}$ and $\mathcal{I} = \{ (i, j) | x_j \in \Omega_i, y_j \neq y_i \}$ be the sets of all index pairs of target neighbors and impostor neighbors respectively. $\mathcal{T} = \{(i, j, k)|(i, j) \in \mathcal{S}, (i, k) \in \mathcal{I}\}$ denotes the set of all such index triplets.

We define the pair-constrained bounded-distortion metric learning (\textit{p}-BDML) as
\begin{align}\label{eq:BDML_pair}
{\mathop {\min }\limits_{M \in \mathbb{P}_{R}^{d}} } \qquad & { \frac{1}{n} \sum\nolimits_{(i,j) \in \mathcal{S}} {M \bullet {X_{ij}}}} \nonumber \\
{s.t.} \qquad & {M \bullet {X_{ij}} \ge \mu}, \qquad {\forall (i,j) \in \mathcal{I}}, \nonumber \\
& \kappa(M) \le K,
\end{align}
where $n = |\mathcal{S}|$ and $K$ is a parameter to control the upper bound of distortion. Pair-wise constraints are designed to pull two imposter neighbors farther than a margin or push two target neighbors closer than a margin in literatures. Here we only consider the former purpose and minimize the average distance of target neighbors as in \cite{xing2003distance}.

The triplet-constrained bounded-distortion metric learning (\textit{t}-BDML) is formulated as,
\begin{align}\label{eq:BDML_triplet}
{\mathop {\min }\limits_{M \in \mathbb{P}_{R}^{d}} } \qquad & { \frac{1}{n} \sum\nolimits_{(i,j) \in \mathcal{S}} {M \bullet {X_{ij}}}} \nonumber \\
{s.t.} \qquad & {M \bullet {X_{ik}} - M \bullet {X_{ij}} \ge \mu}, \quad {\forall (i,j,k) \in \mathcal{T}}, \nonumber \\
& \kappa(M) \le K.
\end{align}
The above inequality constraint of triplet nearest neighbors ensure that any given point has its impostor neighbors at least one unit margin farther than its target neighbors. Note that the unit margin in~\cite{weinberger2009distance} can be set as a arbitrarily positive constant, since it only affects the scale of $M$. In our case, since we consider a bounded domain of $M$ (\ie, $M \in \mathbb{P}_{R}^{d}$), the margin $\mu$ is treated as a positive parameter.

Note that the bounded-distortion constraint $\kappa(M) \le K$ implies that $M$ should be PD since otherwise $\kappa(M)$ is unbounded. Albeit nonconvex, the condition number function is quasi-convex. It means all its sublevel sets are convex. This property enables us to transform \textit{p}-BDML and \textit{t}-BDML  to the standard formulation of SDP.



\section{A Bisection Algorithm with Multiplicative Weights Update}\label{sect:solver}

We now present a bisection algorithm for approaching our \textit{p}-BDML and \textit{t}-BDML, which essentially solves a sequence of convex feasibility problems. For each feasibility problem, we resort to the multiplicative weights update (MWU) method~\cite{kale2007efficient,arora2012multiplicative}, which is a meta algorithm and has many variants in different disciplines. The reason of choosing MWU is that it generates an approximate solution with guaranteed constraint violation -- it is important for the analysis in Sec.~\ref{sect:general_bounds} to hold.

\begin{algorithm}[t]
\caption{: A Bisection Method}\label{alg:bisection}
\begin{algorithmic}[1]
\STATE Given the interval of $g^{*}$ as $[L, U]$, tolerance $\epsilon > 0$,
\STATE \textbf{Repeat}
\STATE \qquad ${\bar{g}} = (L + U)/2$.
\STATE \qquad Solve the convex feasibility problem~(\ref{eq:feasibilityProb}).
\STATE \qquad \textbf{If} problem~(\ref{eq:feasibilityProb}) is feasible\textbf{:} \qquad $U = {\bar{g}}$.
\STATE \qquad \textbf{Else}\textbf{:} \qquad \qquad \qquad \qquad \qquad $L = {\bar{g}}$.
\STATE \textbf{Until} $U - L \leq \epsilon$
\STATE Return the final objective value as ${\bar{g}}$.
\end{algorithmic}
\end{algorithm}

\subsection{Sequential Convex Feasibility Problems}\label{sect:seq_convex}

In what follows, we only describe the convex feasibility problem for \textit{p}-BDML, since the formulation for \textit{t}-BDML only differs in constraints. We denote the objective function as $g(M) = G \bullet M$ where $G = \frac{1}{n} \sum\nolimits_{(i,j) \in \mathcal{S}} X_{ij}$. Its optimal value $g^{*}$ is assumed to lie in the initial interval $[L, U]$, where $L$ and $U$ are set as $0$ and $g(I)$ respectively. $I$ is the identity matrix. Our bisection algorithm estimates $g$ and narrows down the interval by half in each iteration. The procedure of the bisection algorithm is outlined in Alg.~\ref{alg:bisection}.

Specifically, if $g$ is not larger than ${\hat g}$ in one iteration, we solve a convex feasibility problem as
\begin{align}\label{eq:feasibilityProb}
find \qquad & M \in \mathbb{P}_{R}^{d}, \quad \alpha > 0 \nonumber \\
{s.t.} \qquad & {G \bullet M} \le {\bar{g}}, \nonumber \\
& {M \bullet X_{ij} \ge \mu}, \quad {\forall (i,j) \in \mathcal{I}}, \nonumber \\
& {\alpha I \preceq M \preceq \alpha KI}.
\end{align}
Here we introduce a positive auxiliary variable $\alpha$ and transform the bounded-distortion constraint into two generalized inequality constraints. The resultant convex feasibility problem can be approximately solved by the efficient MWU method, to be elaborated on in the next section.

\subsection{Multiplicative Weights Update Method}

Before applying MWU method, we reformulate the feasibility problem~(\ref{eq:feasibilityProb}) to a general form via introducing slack variables ${M_1} = M - \alpha I$ and ${M_2} = \alpha KI - M$. Then we construct a sparse symmetric matrix $Y \in \mathbb{P}_{R}^{3d+1}$ of which the block diagonal entries are $M$, $M_1$, $M_2$ and $\alpha$. All constraints except $\mathbb{P}_{R}^{3d+1}$ in~(\ref{eq:feasibilityProb}) are rewritten as ${J_i} \bullet Y \ge {h_i}$. $\mathbb{P}_{R}^{3d+1}$ can be deemed as an easy constraint, contrary to each hard constraint ${J_i} \bullet Y \ge {h_i}$. With this change, we obtain the equivalent formulation of (\ref{eq:feasibilityProb}) as
\begin{align}\label{eq:feasibilityProbSDP}
find    \qquad  & Y \in \mathbb{P}_{R}^{3d+1} \nonumber \\
{s.t.}  \qquad  & {{J_i} \bullet Y \ge {h_i}}, \quad \forall i = 1,...,m.
\end{align}
The number of constraints is $m = |\mathcal{I}|+4d^2+2$. We also introduce a closely related feasibility problem as
\begin{align}\label{eq:feasibilityProbORACLE}
find    \qquad  & Y \in \mathbb{P}_{R}^{3d+1} \nonumber \\
{s.t.}  \qquad  & \sum\nolimits_{i = 1}^m {p_i\left( {{J_i} \bullet Y - {h_i}} \right)}  \ge 0.
\end{align}
Here $\mathbf{p} = [p_{1}, ..., p_{m}]^{\top}$ is a probability vector, \ie, $\forall i, p_i \ge 0$ and $\sum\nolimits_{i = 1}^m p_i = 1$. Note that this problem only contains 2 constraints, easy to solve. The relationship between problem~(\ref{eq:feasibilityProbSDP}) and~(\ref{eq:feasibilityProbORACLE}) is summarized by the following lemma.
\begin{lemma}\label{lem:feasibilityEquivalence}
If problem~(\ref{eq:feasibilityProbSDP}) has a feasible solution $Y^{*}$, given any probability $\mathbf{p}$, then $Y^{*}$ is feasible for problem \eqref{eq:feasibilityProbORACLE}. Equivalently, if there exists a probability $\mathbf{p}$ such that problem \eqref{eq:feasibilityProbORACLE} is infeasible, then problem \eqref{eq:feasibilityProbSDP} is infeasible.
\end{lemma}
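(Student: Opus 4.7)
The plan is to prove the lemma directly by showing that feasibility of problem~(\ref{eq:feasibilityProbSDP}) implies feasibility of problem~(\ref{eq:feasibilityProbORACLE}) for \emph{every} probability vector $\mathbf{p}$, and then to observe that the second statement of the lemma is simply the contrapositive of the first.

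For the forward direction, I start from a feasible $Y^{*}$ of (\ref{eq:feasibilityProbSDP}), which means $Y^{*} \in \mathbb{P}_{R}^{3d+1}$ and $J_i \bullet Y^{*} - h_i \ge 0$ for every $i = 1,\ldots,m$. The key observation is that a nonnegative linear combination of nonnegative numbers is nonnegative. Concretely, given any probability vector $\mathbf{p}$ with $p_i \ge 0$ and $\sum_i p_i = 1$, I multiply each inequality $J_i \bullet Y^{*} - h_i \ge 0$ by $p_i$ and sum over $i$ to obtain
\begin{equation*}
\sum_{i=1}^{m} p_i \bigl(J_i \bullet Y^{*} - h_i\bigr) \ge 0,
\end{equation*}
which is precisely the single hard constraint of (\ref{eq:feasibilityProbORACLE}). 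Since $Y^{*}$ already lies in $\mathbb{P}_{R}^{3d+1}$, it is feasible for (\ref{eq:feasibilityProbORACLE}).

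For the equivalent contrapositive form, suppose there exists some probability $\mathbf{p}$ for which (\ref{eq:feasibilityProbORACLE}) is infeasible. If (\ref{eq:feasibilityProbSDP}) were feasible, say with witness $Y^{*}$, then by the previous paragraph $Y^{*}$ would be feasible for (\ref{eq:feasibilityProbORACLE}) under this very $\mathbf{p}$, a contradiction. Hence (\ref{eq:feasibilityProbSDP}) must be infeasible.

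There is no real obstacle in the argument: the lemma is essentially the standard Lagrangian-type observation that aggregating many inequalities with nonnegative weights yields a weaker (``relaxed'') feasibility problem, so the original feasible region is contained in the relaxed one for every choice of weights. The only thing to make sure of is that the easy constraint $Y \in \mathbb{P}_{R}^{3d+1}$ is preserved on both sides, which it is by construction, since it appears identically in (\ref{eq:feasibilityProbSDP}) and (\ref{eq:feasibilityProbORACLE}).
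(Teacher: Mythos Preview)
Your proof is correct and follows essentially the same approach as the paper: both argue that nonnegative weighting of the satisfied constraints $J_i \bullet Y^{*} - h_i \ge 0$ yields the aggregated inequality, and then obtain the second claim by contraposition (the paper spells out the contrapositive directly rather than invoking the forward direction, but the logic is identical).
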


With the lemma, we now describe the MWU method for approaching problem \eqref{eq:feasibilityProbSDP}. Basically, MWU maintains a weight vector $\mathbf{w} \in \mathbb{R}_{+}^{m}$, where each entry $w_i$ represents the importance of the $i$-th constraint. It iteratively solves problem \eqref{eq:feasibilityProbORACLE} and updates weights $\mathbf{w}$ according to the constraint satisfaction ${J_i} \bullet Y - {h_i}$. Intuitively, if one constraint is more satisfied, the corresponding importance should be less and we should decrease its weight.

In $t$-th round of the algorithm, we get a probability vector $\mathbf{p}^{(t)}$ by normalizing the nonnegative weights $\mathbf{w}^{(t)}$. Then we solve the $2$-constraint feasibility problem \eqref{eq:feasibilityProbORACLE} by maximizing $\sum\nolimits_{i = 1}^m {p_i\left( {{J_i} \bullet Y^{(t)} - {h_i}} \right)}$ over $\mathbb{P}_{R}^{3d+1}$. If the maximum value is greater than $0$, we take the corresponding $Y^{(t)}$ as a feasible solution. Otherwise \eqref{eq:feasibilityProbORACLE} is infeasible. We call the solver of problem \eqref{eq:feasibilityProbORACLE} as an \textsc{Oracle}.

Implementing \textsc{Oracle} needs to compute the largest eigenvector of the matrix $C = \sum\nolimits_{i = 1}^m {p_i[{J_i} - (h_i/R)I]}$, which can be efficiently handled by Lanczos algorithm. Here $R$ is the trace bound parameter in $\mathbb{P}_{R}^{3d+1}$.

Assuming \textsc{Oracle} obtains a feasible solution $Y^{(t)}$, we denote the normalized satisfaction of $i$-th constraint of problem \eqref{eq:feasibilityProbSDP} as $\eta_i^{(t)} = \frac{1}{\rho}[{J_i} \bullet Y^{(t)} - {h_i}]$. Here $\rho$ is called the \emph{width} parameter, satisfying that $\forall i$, $\left| {{J_i} \bullet {Y^{(t)}} - {h_i}} \right| \le \rho$. We update the weights as $w_i^{(t+1)} = w_i^{(t)}(1 - \varepsilon \eta_i^{(t)})$, where $\varepsilon$ is a parameter smaller than $1/2$. Hence, $w_i^{(t+1)}$ is smaller than $w_i^{(t)}$ if $\eta_i^{(t)} > 0$, and its value increases otherwise.

The algorithm is depicted in Alg.~\ref{alg:MWU}. After $T$ rounds, the averaged solution $\bar Y = (\sum\nolimits_{t = 1}^T {{Y^{(t)}}} )/T$ is returned. We have the Theorem~\ref{the:correctnessMWU} following~\cite{arora2012multiplicative} to guarantee that either $\bar Y$ achieves a predefined accuracy or we claim that the original problem \eqref{eq:feasibilityProbSDP} is infeasible.

\begin{theorem}\label{the:correctnessMWU}
Let $\delta\!>\!0$ be a given additive error\footnote{Additive error up to $\delta$ means that, any constraint is violated at most $\delta$, \ie, $\forall i, {J_i} \bullet Y - {h_i} \ge -\delta$.}. Alg.~\ref{alg:MWU} either solves~problem~(\ref{eq:feasibilityProbSDP}) up to $\delta$, or correctly concludes that it is infeasible, making $O(\frac{{{\rho ^2}\ln (m)}}{{{\delta ^2}}})$ calls to the \textsc{Oracle}.
\end{theorem}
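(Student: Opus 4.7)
The plan is to follow the standard analysis of the multiplicative weights update framework (as in Arora--Hazan--Kale), specialized to the SDP feasibility setting of problem~(\ref{eq:feasibilityProbSDP}). The key ingredient is the regret inequality for MWU: for any sequence of loss vectors $\boldsymbol{\eta}^{(t)}\in[-1,1]^m$ and for the weights produced by the update $w_i^{(t+1)}=w_i^{(t)}(1-\varepsilon\eta_i^{(t)})$, one has for every coordinate $i$
\begin{equation*}
\sum_{t=1}^{T}\mathbf{p}^{(t)}\!\cdot\!\boldsymbol{\eta}^{(t)} \;\le\; \sum_{t=1}^{T}\eta_i^{(t)} \;+\; \varepsilon\sum_{t=1}^{T}|\eta_i^{(t)}| \;+\; \frac{\ln m}{\varepsilon}.
\end{equation*}
The quantities used in Alg.~\ref{alg:MWU} are $\eta_i^{(t)} = \frac{1}{\rho}(J_i\bullet Y^{(t)} - h_i)$, which by the definition of the width $\rho$ satisfy $|\eta_i^{(t)}|\le 1$, so the regret bound applies directly.

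First, I would case-split on the outcomes of the calls to \textsc{Oracle}. If, in some round $t$, \textsc{Oracle} reports that problem~\eqref{eq:feasibilityProbORACLE} is infeasible for the current $\mathbf{p}^{(t)}$, then by Lemma~\ref{lem:feasibilityEquivalence} the original problem~\eqref{eq:feasibilityProbSDP} is infeasible and the algorithm can correctly terminate. Otherwise, \textsc{Oracle} returns a $Y^{(t)}\in\mathbb{P}_R^{3d+1}$ with $\sum_i p_i^{(t)}(J_i\bullet Y^{(t)}-h_i)\ge 0$, i.e.\ $\mathbf{p}^{(t)}\!\cdot\!\boldsymbol{\eta}^{(t)}\ge 0$. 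So if the algorithm runs all $T$ rounds without declaring infeasibility, the left-hand side of the regret inequality is nonnegative.

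Next, I would plug this into the regret inequality and use $|\eta_i^{(t)}|\le 1$ to get, for every $i$,
\begin{equation*}
0 \;\le\; \sum_{t=1}^{T}\eta_i^{(t)} + \varepsilon T + \frac{\ln m}{\varepsilon},
\end{equation*}
so that $\frac{1}{T}\sum_t \eta_i^{(t)} \ge -\varepsilon - \frac{\ln m}{\varepsilon T}$. Rescaling by $\rho$ and using the linearity of $J_i\bullet(\cdot)$ together with $\bar Y=\frac{1}{T}\sum_t Y^{(t)}$, this gives $J_i\bullet\bar Y - h_i \;\ge\; -\rho\bigl(\varepsilon + \tfrac{\ln m}{\varepsilon T}\bigr)$ for all $i$. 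Convexity of $\mathbb{P}_R^{3d+1}$ ensures $\bar Y\in\mathbb{P}_R^{3d+1}$, so $\bar Y$ satisfies the easy constraint exactly and violates each hard constraint by at most the displayed amount.

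Finally, I would optimize the parameters. Setting $\varepsilon = \delta/(2\rho)$ makes the first term equal to $\delta/2$; choosing $T = \bigl\lceil \tfrac{4\rho^2\ln m}{\delta^2}\bigr\rceil = O(\rho^2\ln m/\delta^2)$ makes the second term at most $\delta/2$, giving the required additive violation $\delta$ per constraint with exactly $T$ calls to \textsc{Oracle}. I would also note that $\varepsilon=\delta/(2\rho)\le 1/2$ whenever $\delta\le\rho$, which is the meaningful regime. The only subtle step is the scaling by $\rho$ that converts the MWU regret bound (which requires unit-bounded losses) into a statement about the unnormalized constraint residuals $J_i\bullet\bar Y - h_i$; everything else is bookkeeping. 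I do not expect a genuine obstacle, as the argument is essentially a direct specialization of the standard MWU analysis, but the place to be careful is verifying that the width assumption is indeed what allows the normalization $\eta_i^{(t)}=\rho^{-1}(J_i\bullet Y^{(t)}-h_i)\in[-1,1]$ on which the entire regret bound hinges.
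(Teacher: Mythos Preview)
Your proposal is correct and follows essentially the same approach as the paper: both invoke Lemma~\ref{lem:feasibilityEquivalence} for the infeasibility branch, use the MWU regret inequality $\sum_t \mathbf{p}^{(t)}\!\cdot\!\boldsymbol{\eta}^{(t)} \le \sum_t \eta_i^{(t)} + \varepsilon\sum_t|\eta_i^{(t)}| + \ln(m)/\varepsilon$ together with nonnegativity of the left-hand side, and then choose $\varepsilon$ and $T$ to make the violation at most $\delta$. The only cosmetic difference is that the paper splits $\sum_t|\eta_i^{(t)}|$ into its positive and negative parts to obtain the bound $J_i\bullet\bar Y - h_i \ge -\frac{\rho}{1+\varepsilon}\bigl(2\varepsilon + \tfrac{\ln m}{\varepsilon T}\bigr)$ and then sets $\varepsilon=\delta/(4\rho)$, $T=8\rho^2\ln(m)/\delta^2$, whereas you bound $\sum_t|\eta_i^{(t)}|\le T$ directly and set $\varepsilon=\delta/(2\rho)$, $T=4\rho^2\ln(m)/\delta^2$; both yield the same $O(\rho^2\ln(m)/\delta^2)$ oracle complexity.
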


\begin{algorithm}[t]
\caption{: Multiplicative Weights Update Method}\label{alg:MWU}
\begin{algorithmic}[1]
\STATE \textbf{Initialization:} Fix a $\varepsilon \le 1/2$, for each constraint, associate the weight $w_i^{(1)} = 1$.
\STATE \textbf{For} $t = 1, 2, ..., T$:
\STATE \qquad Normalize $\mathbf{w}^{(t)}$ to get the probability vector $\mathbf{p}^{(t)}$.
\STATE \qquad Call the \textsc{Oracle} with $\mathbf{p}^{(t)}$.
\STATE \qquad \textbf{If} \textsc{Oracle} succeeds to find a solution $Y^{(t)}$
\STATE \qquad \qquad $\eta_i^{(t)} = \frac{1}{\rho}[{J_i} \bullet Y^{(t)} - {h_i}]$.
\STATE \qquad \qquad $w_i^{(t+1)} = w_i^{(t)}(1 - \varepsilon \eta_i^{(t)})$.
\STATE \qquad \textbf{Else}
\STATE \qquad \qquad Return that the problem is infeasible.
\STATE \qquad \textbf{End}
\STATE \textbf{End}
\STATE Return $\bar Y = (\sum\nolimits_{t = 1}^T {{Y^{(t)}}} )/T$ as a final solution.
\end{algorithmic}
\end{algorithm}


\section{Pseudo-Metric \& Dimension Reduction}\label{sect:pseudo_metric}

In this section, we deal with the case of pseudo-metric, \ie, $M$ is PSD. In the literature of Mahalanobis metric learning, a PSD $M$ is beneficial due to the existence of decomposition $M = C^{\top}C$, where $C \in \mathbb{R}^{q \times d}$. The distance function could be rewritten as $d(x, y) = ||Cx - Cy||^2$. It thus removes the PSD constraint and allows flexible dimension reduction by choosing $q < d$.

In our setting, a PSD $M$ could be problematic if it has an unbounded condition number. According to the spectral theorem, decomposition $M = Q^{\top} \Lambda Q$ is applicable, where $\Lambda \in \mathbb{R}^{q \times q}$ is a diagonal matrix with eigenvalues of $M$, $Q \in \mathbb{R}^{q \times d}$ has orthonormal rows and $q$ is the rank of $M$. We can also choose different $q$ to form different low rank approximation of $M$. Hence $Q$ and $\Lambda$ act as a dimension-reduction mapping and a dimension-wise scaling operation respectively.

By replacing $M$ with the decomposition and adding orthogonal constraints for $Q$ in original BDML, we can conduct the pseudo-metric learning via alternatively optimizing $Q$ and $\Lambda$. Specifically, when $Q$ is fixed, optimizing the diagonal $\Lambda$ is just a simple case of original BDML. Nevertheless, optimizing $Q$ with a known $\Lambda$ is not straightforward. Especially, in the case of \textit{p}-BDML, when $\Lambda$ is fixed such that $\Lambda \in \mathcal{P}$ and $\kappa(\Lambda) \le K$, the problem \eqref{eq:BDML_pair} can be reformulated as below,
\begin{align}\label{eq:BDML_pseudo_metric}
{\mathop {\min }\limits_{Q \in \mathbb{R}^{q \times d}} } \qquad & { \frac{1}{n} \sum\nolimits_{(i,j) \in \mathcal{S}} {{X_{ij}} \bullet (Q^{\top} \Lambda Q)}} \nonumber \\
{s.t.} \qquad & {{{X_{ij}} \bullet (Q^{\top} \Lambda Q)} \ge \mu}, \qquad {\forall (i,j) \in \mathcal{I}}, \nonumber \\
& QQ^{\top} = I.
\end{align}
Note this learning problem is nontrivial due to the fact that optimizing $Q$ is a quadratic constrained quadratic programming (QCQP). To overcome the difficulty, we have the following proposition,

\begin{proposition}\label{pro:mapping}
Problem \eqref{eq:BDML_pseudo_metric} can be relaxed to a SDP as following,
\begin{align}\label{eq:BDML_pseudometric_relax}
{\mathop {\min }\limits_{\tilde{Q} \in \mathbb{S}_{+}^{qd}} } \qquad & {\frac{1}{n} \sum\nolimits_{(i,j) \in \mathcal{S}} {\tilde{X}_{ij} \bullet \tilde{Q}}} \nonumber \\
{s.t.} \qquad & {\tilde{X}_{ij} \bullet \tilde{Q}} \ge \mu, \qquad & {\forall (i,j) \in \mathcal{I}}, \nonumber \\
& {{A}_{uv} \bullet \tilde{Q}} = b_{uv}, & {\forall (u, v) \in \mathcal{C}},
\end{align}
where $\tilde{X}_{ij} = X_{ij} \otimes \Lambda$ and $\otimes$ stands for Kronecker product. ${A}_{uv}$ is a block diagonal matrix which contains $d$ identical blocks $B_{uv} \in \mathbb{R}^{q \times q}$. $(u, v)$ and $(v, u)$-th entries of $B_{uv}$ are 1 while others are 0. $b_{uv} = 2$ if $u = v$, otherwise $b_{uv} = 0$. $\mathcal{C} = \{(u, v) \in [q] \times [q] | u \le v \}$ and $[q] = \{1, \dots, q\}$.
\end{proposition}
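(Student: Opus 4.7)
The plan is to apply the standard lifting-based SDP relaxation to the QCQP \eqref{eq:BDML_pseudo_metric}. First I would vectorize: set $\mathbf{q} = \text{vec}(Q) \in \mathbb{R}^{qd}$ using column-major stacking so that $\mathbf{q}_{(k-1)q+u} = Q_{uk}$, rewrite both the objective and every constraint as a quadratic form in $\mathbf{q}$, and then lift to the linear variable $\tilde Q := \mathbf{q}\mathbf{q}^{\top} \in \mathbb{S}_{+}^{qd}$. Dropping the nonconvex rank-one requirement on $\tilde Q$ is what turns the QCQP into the SDP \eqref{eq:BDML_pseudometric_relax}.

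For the objective and the dissimilarity constraints, the key identity I would verify is
\[
X_{ij} \bullet (Q^{\top} \Lambda Q) \;=\; \mathbf{q}^{\top} (X_{ij} \otimes \Lambda)\,\mathbf{q},
\]
which holds precisely because $\Lambda$ is diagonal. My argument is the cyclic rearrangement $X_{ij} \bullet (Q^{\top}\Lambda Q) = \mathbf{Tr}(\Lambda Q X_{ij} Q^{\top}) = \sum_{u} \Lambda_{uu}\, Q_{u}^{\top} X_{ij} Q_{u}$, where $Q_{u}$ is the $u$-th row of $Q$, followed by substituting $Q_{uk} = \mathbf{q}_{(k-1)q+u}$ to obtain $\sum_{k,l,u} (X_{ij})_{kl}\Lambda_{uu}\mathbf{q}_{(k-1)q+u}\mathbf{q}_{(l-1)q+u}$, and finally matching this with the column-major Kronecker identity $(X_{ij} \otimes \Lambda)_{(k-1)q+u,\,(l-1)q+v} = (X_{ij})_{kl}\Lambda_{uv}$; only the $u = v$ terms survive because $\Lambda$ is diagonal. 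This converts both the averaged objective and each $X_{ij} \bullet (Q^{\top}\Lambda Q) \ge \mu$ into the corresponding terms $\tilde X_{ij} \bullet \tilde Q$ appearing in \eqref{eq:BDML_pseudometric_relax}.

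For the orthogonality constraint $QQ^{\top} = I_{q}$, I would expand $(QQ^{\top})_{uv} = \sum_{k} Q_{uk}Q_{vk} = \sum_{k} \mathbf{q}_{(k-1)q+u}\mathbf{q}_{(k-1)q+v}$. With $B_{uv} = e_{u}e_{v}^{\top} + e_{v}e_{u}^{\top} \in \mathbb{R}^{q \times q}$ and $A_{uv}$ the $qd \times qd$ block-diagonal matrix whose $d$ blocks are all equal to $B_{uv}$, a short calculation gives $\mathbf{q}^{\top} A_{uv}\,\mathbf{q} = 2(QQ^{\top})_{uv}$: the factor of two comes from the two off-diagonal entries of $B_{uv}$, and it persists when $u = v$ because $B_{uu} = 2 e_{u}e_{u}^{\top}$. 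Hence $QQ^{\top} = I_{q}$ is equivalent to $A_{uv} \bullet \tilde Q = b_{uv}$ for every $(u,v) \in \mathcal{C}$, with $b_{uu} = 2$ and $b_{uv} = 0$ otherwise, matching the equality constraints in \eqref{eq:BDML_pseudometric_relax} (symmetry of $QQ^{\top}$ explains why indexing over $\mathcal{C}$ suffices).

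All that remains is the lifting step itself: replace every quadratic form $\mathbf{q}^{\top} M \mathbf{q}$ by $M \bullet \tilde Q$, and drop the rank-one restriction on $\tilde Q \in \mathbb{S}_{+}^{qd}$. Since this strictly enlarges the feasible set, the resulting SDP is a valid relaxation. The only genuine obstacle is bookkeeping with Kronecker conventions: the opposite column/row vectorization swaps the roles of $X_{ij}$ and $\Lambda$ and produces $\Lambda \otimes X_{ij}$ rather than $X_{ij} \otimes \Lambda$, and the blocks of $A_{uv}$ must be placed along the $d$-direction (not the $q$-direction) to remain consistent with $\mathbf{q} = \text{vec}(Q)$. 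Once these index-matching details are in order, the argument is entirely routine SDP-relaxation machinery.
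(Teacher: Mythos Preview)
Your proposal is correct and follows essentially the same route as the paper: vectorize $Q$, rewrite the objective and constraints as quadratic forms via the Kronecker identity $\mathbf{Tr}(Q^{\top}\Lambda Q X_{ij}) = \mathrm{vec}(Q)^{\top}(X_{ij}\otimes\Lambda)\,\mathrm{vec}(Q)$, lift to $\tilde Q = \mathrm{vec}(Q)\mathrm{vec}(Q)^{\top}$, and drop the rank-one constraint. Your write-up is in fact more explicit than the paper's (you spell out the index-matching for the orthogonality constraints and flag the vectorization convention), though your remark that the Kronecker identity holds ``precisely because $\Lambda$ is diagonal'' is a slight overstatement---it holds for any $\Lambda$ via $\mathrm{vec}(\Lambda Q X_{ij}) = (X_{ij}\otimes\Lambda)\mathrm{vec}(Q)$---but this does not affect the argument.
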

This proposition shows that when $\Lambda$ is fixed we can learn $Q$ by solving the above SDP relaxation. Moreover, since the equality constraints in \eqref{eq:BDML_pseudometric_relax} imply $\mathbf{Tr}(\tilde{Q}) = q$, we thus can exploit the MWU method again.

Denoting the optimal objective values of problem \eqref{eq:BDML_pseudo_metric} and \eqref{eq:BDML_pseudometric_relax} as $\varrho_{qp}$ and $\varrho_{sdp}$ respectively, it is obvious that $\varrho_{sdp} \le \varrho_{qp}$. Hence we aim at up-bounding $\varrho_{qp}$. Specifically, once we obtained the optimal solution ${\tilde{Q}}^{*}$ of problem \eqref{eq:BDML_pseudometric_relax}, we construct an approximate solution $\xi$ problem \eqref{eq:BDML_pseudo_metric} based on a Gaussian randomization procedure shown in Alg.~\ref{alg:gaussian_random}.
We prove the following theorem to assure that in the worst case Alg.~\ref{alg:gaussian_random} would possibly generate an approximate
solution with approximation ratio $\omega$.
\begin{theorem}\label{thm:tailbound}
If the optimal solution of problem \eqref{eq:BDML_pseudometric_relax} is $\tilde{Q}^{*}$ and $\xi \in \mathbb{R}^{qd}$ is a random vector generated from the real-valued normal distribution $\mathcal{N}(0, {\tilde{Q}}^{*})$, then for any $\gamma > 0$, $\epsilon \ge 0$ and $\omega \ge 1$, we have,
{\footnotesize
\begin{align}
& Prob \left(\nu \ge \gamma \mu ~~~ \& ~~~ \zeta \le \epsilon ~~~ \& ~~~ \xi^{\top} \tilde{G} \xi \le \omega \tilde{G} \bullet \tilde{Q}^{*} \right) \ge 1 - |\mathcal{I}| \max \left( \sqrt{\gamma}, \frac{2(r-1)\gamma}{\pi - 2} \right) \nonumber \\
& - r \exp{\left(-\frac{1}{2}\left(\omega-\sqrt{2\omega - 1}\right)\right)} - \frac{rq(q+1)}{2} \left[ \exp{\left(-\frac{(\tau - 1)^2}{4}\right)} + \exp{\left(-\frac{\epsilon^2}{8rdq^2}\right)} \right],  \nonumber
\end{align}}
where $r = rank(\tilde{Q}^{*})$ and $\tau = \sqrt{\frac{\epsilon}{q} \left(\frac{2}{rd}\right)^{1/2} + 1}$. $\nu$, $\zeta$ and $\tilde{G}$ are defined respectively as $\nu = \min_{(i,j) \in \mathcal{I}} \xi^{\top} \tilde{X}_{ij} \xi$, $\zeta = \max_{(u,v) \in \mathcal{C}} |\xi^{\top} {A}_{uv} \xi - b_{uv}|$ and $\tilde{G} = \frac{1}{n} \sum\nolimits_{(i,j) \in \mathcal{S}} \tilde{X}_{ij}$.
\end{theorem}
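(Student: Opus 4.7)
The plan is a union bound over the three bad events $\{\nu < \gamma\mu\}$, $\{\zeta > \epsilon\}$, and $\{\xi^\top \tilde{G}\xi > \omega\,\tilde{G}\bullet\tilde{Q}^*\}$, each handled by a quadratic-form concentration or anti-concentration inequality for Gaussians. The uniform reduction I would use is $\xi = (\tilde{Q}^*)^{1/2} z$ with $z \sim \mathcal{N}(0,I_{qd})$: for any symmetric $M$, the form $\xi^\top M\xi = z^\top \tilde{M} z$ with $\tilde{M}=(\tilde{Q}^*)^{1/2} M (\tilde{Q}^*)^{1/2}$ has rank at most $r$, so it is a weighted sum of at most $r$ independent $\chi^2_1$ variables whose weights sum to $M \bullet \tilde{Q}^*$. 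Feasibility of $\tilde{Q}^*$ for~\eqref{eq:BDML_pseudometric_relax} then pins down the relevant means: $\tilde{X}_{ij}\bullet\tilde{Q}^* \ge \mu$ on $\mathcal{I}$ and $A_{uv}\bullet\tilde{Q}^* = b_{uv}$ on $\mathcal{C}$.

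The upper tail for the objective follows from a Laurent--Massart Chernoff bound on a weighted chi-square with at most $r$ terms: choosing the Chernoff parameter so the deviation equals $\omega\,\tilde{G}\bullet\tilde{Q}^*$ produces the exponent $\tfrac{1}{2}(\omega-\sqrt{2\omega-1})$, and the prefactor $r$ arises from a worst-case domination of the weighted sum by $r$ copies of its largest eigen-mode. The two-sided equality bound uses that $\xi^\top A_{uv}\xi - b_{uv}$ is centered by construction; a Hanson--Wright-type split of the mgf gives the upper tail $\exp\!\bigl(-(\tau-1)^2/4\bigr)$ after the reparameterization $\tau$, and the lower tail $\exp\!\bigl(-\epsilon^2/(8rdq^2)\bigr)$, where the factor $rd$ traces back to the Frobenius norm of $A_{uv}$ (block-diagonal with $d$ identical $q\times q$ blocks) evaluated on the rank-$r$ spectrum of $\tilde{Q}^*$. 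Union bounding over the $q(q+1)/2$ equality constraints, each with an additional spectral factor of $r$, produces the last term of the stated bound, and an outer union bound over the three bad events finishes the assembly.

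The main obstacle I expect is the inequality-constraint term, because it demands a \emph{lower} tail on $\xi^\top \tilde{X}_{ij}\xi$: weighted chi-squares are not exponentially concentrated near zero, since the Gaussian density at the origin is strictly positive. My plan is to decompose $\xi^\top \tilde{X}_{ij}\xi = \sum_{k=1}^{r'} \alpha_k z_k^2$ with $r' \le r$ and $\sum_k \alpha_k \ge \mu$, then isolate the largest coefficient. Anti-concentration for the leading mode uses $\Pr(|z_1|<t)\le t\sqrt{2/\pi}$ together with $\alpha_1 \ge \mu/r'$ to yield $\Pr(\alpha_1 z_1^2 < \gamma\mu) \le \sqrt{\gamma}$, while for the residual modes a Paley--Zygmund inequality gives $\Pr\!\bigl(\sum_{k\ge 2}\alpha_k z_k^2 < \gamma\mu\bigr) \le 2(r-1)\gamma/(\pi-2)$. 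Taking the sharper of the two regimes yields the $\max(\sqrt{\gamma},\,2(r-1)\gamma/(\pi-2))$ per-constraint bound, and a final union bound over $|\mathcal{I}|$ closes the term. The delicate bookkeeping is carrying $r$ explicitly throughout, since it appears both as the number of effective modes (shrinking the anti-concentration margin) and implicitly in the per-mode scaling through the constraint $\sum_k \alpha_k \ge \mu$; I would resist any early attempt to hide $r$ inside the constants.
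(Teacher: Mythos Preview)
Your plan matches the paper's proof essentially step for step: the same union bound over the three bad events, the same Gaussian whitening $\xi = U\bar\xi$ to reduce each quadratic form to a rank-$\le r$ weighted $\chi^2$ sum, the Laurent--Massart chi-square tail for the upper bound on $\xi^\top\tilde G\xi$, and a Cauchy--Schwarz/Frobenius-norm step (your ``Hanson--Wright-type split'') for the two-sided equality constraints with $\|A_{uv}\|_F\le\sqrt{2d}$ and $\mathbf{Tr}(\tilde Q^*)=q$. The one place the paper diverges from your sketch is the lower-tail anti-concentration for the margin constraints: rather than re-deriving it via a leading-mode/Paley--Zygmund split, the paper simply invokes Lemma~1 of Luo, Sidiropoulos, Tseng, and Zhang (2007) as a black box to get the $\max\{\sqrt\gamma,\,2(r-1)\gamma/(\pi-2)\}$ bound directly, so you can safely replace that paragraph with a citation.
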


\begin{algorithm}[t]
\caption{: Gaussian Randomization Procedure}\label{alg:gaussian_random}
\begin{algorithmic}[1]
\STATE \textbf{Initialization:} Given the optimal solution ${\tilde{Q}}^{*}$, iteration number $T^{\prime}$, ratio $\gamma$ and tolerance $\epsilon$.
\STATE \textbf{For} $t = 1, 2, ..., T^{\prime}$:
\STATE \qquad Sample ${\xi_t} \sim \mathcal{N}(0, {\tilde{Q}}^{*})$.
\STATE \textbf{End}
\STATE $\xi = {\arg \min}_{\xi_t} ~~ {\frac{1}{n} \sum\nolimits_{(i,j) \in \mathcal{S}} { \xi_t^{\top} \tilde{X}_{ij}  \xi_t }}$.
\STATE Reshape $\xi$ from $\mathbb{R}^{qd \times 1}$ to $\mathbb{R}^{q \times d}$.
\STATE Return $\xi$ as the approximate solution of problem \eqref{eq:BDML_pseudo_metric}.
\end{algorithmic}
\end{algorithm}

\begin{remark}\label{thm_approx_acc}
This theorem indicates that with well chosen $\gamma$ and $\epsilon$, Alg.~\ref{alg:gaussian_random} can generate an approximate solution for \eqref{eq:BDML_pseudo_metric} with guaranteed approximation ratio even in the worst case. For example, we can consider a real case with $q = 10$, $d = 100$ and the number of constraints $|\mathcal{I}| = 100$. By choosing $\gamma = \pi/16|\mathcal{I}|^2$, $\epsilon = 40q\sqrt{rd}$ and with appropriate rank reduction on $\tilde{Q}^{*}$ as~\cite{luo2007approximation}, it can be shown that after running Alg.~\ref{alg:gaussian_random} for $100$ iteration, we have very high probability\footnote{The probability is at least $0.999828.$} such that $\frac{1}{\omega} \varrho_{qp} \le \varrho_{sdp} \le \varrho_{qp}$, where $\omega = 10$. However, the price we pay is that the orthogonal constraints are loosely satisfied. In practice, we found that the resultant approximate solution works well, which indicates that removing orthogonal constraints and transforming $\Lambda$ to a full rank matrix may also be an alternative modeling choice.

As for the pseudo-metric learning of \textit{t}-BDML, we can still use the above algorithm to obtain an approximate solution. However, Theorem~\ref{thm:tailbound} does not stand in this case since not all $\tilde{X}_{ij}$ of \textit{t}-BDML are PSD.
\end{remark}


\section{Generalization Bound for BDML}\label{sect:general_bounds}

To theoretically investigate whether the distortion has an impact on the generalization ability, we derive the generalization bound of our BDML following the stability analysis of learning algorithms~\cite{bousquet2002stability,shalev2010learnability}.

Before diving into the details, we first introduce one assumption that we only consider the case where the metric matrix is full rank. Then we clarify some preliminary notations. Each training sample $z$ inside the training set $D$ is drawn i.i.d. from some unknown distribution $\mathcal{D}$. And the range of $z$ is denoted as $\mathcal{Z}$. $D^{i}$ is a perturbed set of $D$ obtained via replacing $i$-th sample with a new sample drawn from $\mathcal{D}$, \ie, $D^{i} = \{D \backslash z_i \cup z_i^{\prime}\}$, where $z_i^{\prime} \sim \mathcal{D}$. We make a mild assumption that all data points are contained in a $\Gamma$-ball, \ie, ${\left\| x \right\|_2} \le \Gamma$. We denote the learning algorithm as $\mathcal{A}$, the \emph{true risk} or \emph{generalization error} as $\mathcal{R}(\mathcal{A}, D)$ and the \emph{empirical risk} as $\mathcal{R}_{emp} (\mathcal{A}, D) = \frac{1}{n}\sum\nolimits_{k} {\ell(\mathcal{A},{z_{k}})}$, where in our case the loss function $\ell = M \bullet {X_{ij}}$ and $n = |\mathcal{S}|$. Based on~\cite{bousquet2002stability}, we define the Uniform-Replace-One stability as,
\begin{definition}\label{uniformRO}
An algorithm $\mathcal{A}$ has Uniform-Replace-One stability $\beta$ with respect to the loss function $\ell$ if $\forall D \in \mathcal{Z}^{n}$, $\forall i \in \{1,...,n\}$,
\begin{align}
||\ell(\mathcal{A}_D,\cdot) - \ell(\mathcal{A}_{D^{i}},\cdot)||_{\infty} \le \beta.
\end{align}
\end{definition}
Here $\mathcal{A}_D$ means the learning algorithm $\mathcal{A}$ is trained on the dataset $D$. Note that our definition is stronger than the one proposed in~\cite{shalev2010learnability}, thus being more restrictive. We have the following lemma, which specifies the uniform-RO stability of our BDML.
\begin{lemma}\label{lem:stability_BDML}
The Uniform-Replace-One stability of our BDML algorithm with respect to the given loss function $\ell$ is $\beta = \frac{4(K+1)R\Gamma^2}{d}$.
\end{lemma}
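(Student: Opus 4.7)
The definition asks for a uniform bound on $|\ell(\mathcal{A}_D, z) - \ell(\mathcal{A}_{D^i}, z)|$, where the per-sample loss is $\ell(\mathcal{A}_D, z) = M_D \bullet X_{ij}$ with $X_{ij} = (x_i - x_j)(x_i - x_j)^{\top}$. The plan is to observe that this is essentially a bounded-loss-over-a-bounded-set argument rather than a delicate algorithmic stability calculation: both $M_D$ and $M_{D^i}$ lie in the feasible set $\mathbb{P}_R^d \cap \{M : \kappa(M) \le K\}$, so any two matrices from that set already yield losses of the same order. Accordingly I would first rewrite
\begin{align*}
\bigl|\ell(\mathcal{A}_D,z) - \ell(\mathcal{A}_{D^i},z)\bigr|
= \bigl|(M_D - M_{D^i}) \bullet X_{ij}\bigr|,
\end{align*}
and use that $v^{\top} M_D v$ and $v^{\top} M_{D^i} v$ are both nonnegative (writing $v = x_i - x_j$ and using positive definiteness). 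The difference of two nonnegative quantities is bounded in absolute value by their maximum, which in turn is at most $\lambda_{\max}(M)\|v\|_2^2$ for whichever of the two trained matrices attains that maximum.

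Next I would bound $\|v\|_2^2$ and $\lambda_{\max}(M)$ separately. The $\Gamma$-ball assumption gives $\|v\|_2 \le \|x_i\|_2 + \|x_j\|_2 \le 2\Gamma$, so $\|v\|_2^2 \le 4\Gamma^2$. For the eigenvalue bound I would combine the two feasibility constraints: the trace constraint $\mathbf{Tr}(M) \le R$ forces $\lambda_{\min}(M) \le \mathbf{Tr}(M)/d \le R/d$ (the minimum eigenvalue cannot exceed the mean), and the distortion constraint $\kappa(M) \le K$ together with Proposition~\ref{pro:distortion} gives $\lambda_{\max}(M) \le K\lambda_{\min}(M)$. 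Chaining these yields $\lambda_{\max}(M) \le KR/d \le (K+1)R/d$. Putting everything together,
\begin{align*}
\bigl|\ell(\mathcal{A}_D,z) - \ell(\mathcal{A}_{D^i},z)\bigr|
\;\le\; \lambda_{\max}(M)\,\|v\|_2^2
\;\le\; \frac{4(K+1)R\Gamma^2}{d},
\end{align*}
holding uniformly in $z$, which is exactly the asserted $\beta$.

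There is no serious technical obstacle; the conceptual point worth emphasizing is that the two factors in $\beta$ come from two different constraints acting together. The trace constraint alone gives only $\lambda_{\max}(M) \le R$, with no $1/d$ improvement, while the condition-number constraint alone fixes only a ratio and carries no absolute scale. It is the interplay between the trace bound and the distortion bound that produces a stability parameter decaying as $1/d$ and growing linearly in $K$, which is exactly the dependence that will drive the subsequent generalization-bound argument and justify the paper's claim that distortion plays a central role in stability.
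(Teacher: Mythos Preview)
Your proof is correct and follows essentially the same route as the paper: both use the Rayleigh-quotient bounds on $v^\top M v$, the $\Gamma$-ball estimate $\|v\|_2^2 \le 4\Gamma^2$, and the key chain $\lambda_{\max}(M) \le K\lambda_{\min}(M) \le (K/d)\,\mathbf{Tr}(M) \le KR/d$ coming from the trace and distortion constraints. The only cosmetic difference is that the paper bounds $|M\bullet X_{ij} - M^k\bullet X_{ij}|$ by $(\lambda_{\max}+\lambda^k_{\min})\|v\|_2^2 \le \frac{(K+1)R}{d}\cdot 4\Gamma^2$, whereas your ``$|a-b|\le\max(a,b)$'' step actually yields the slightly sharper constant $KR/d$ before you relax it to $(K+1)R/d$ to match the stated $\beta$.
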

This lemma indicates that the Uniform-Replace-One stability of $\mathcal{A}$ is positively correlated with the bound of distortion $K$. It means a low distortion of metric embedding would lead to a stable algorithm. Although $\beta$ does not depend decreasingly on the number of samples $n$, which may not be seen as \emph{stable} in some sense~\cite{bousquet2002stability}, it is clear that the stability can be controlled by the distortion. With this stability result, we further prove the following generalization bound, which theoretically explains the relationship between distortion and generalization error.
\begin{theorem}\label{thm:polybound}
For any metric learning algorithm $\mathcal{A}$ with Uniform-Replace-One stability $\beta$ with respect to the given loss function $\ell$, we have with probability at least $1 - \delta$,
\begin{align}
\mathcal{R}(\mathcal{A}, D) \le \mathcal{R}_{emp} (\mathcal{A}, D) + 2\Gamma\sqrt{\frac{KR}{d\delta} \left(\frac{2KR\Gamma^2}{nd} + 3\beta\right)}. \nonumber
\end{align}
Specifically, for our BDML algorithm,
\begin{align}
\mathcal{R}(\mathcal{A}, D) \le \mathcal{R}_{emp} (\mathcal{A}, D) + \frac{2R\Gamma^2}{d} \sqrt{\frac{2K}{\delta}\left( \frac{K}{n} + 6K + 6 \right)}. \nonumber
\end{align}
\end{theorem}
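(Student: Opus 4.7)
The plan is to follow the classical polynomial (Chebyshev-type) generalization bound of Bousquet and Elisseeff for uniformly stable algorithms. Two ingredients are needed: a uniform upper bound $B$ on the loss and the Uniform-Replace-One stability constant $\beta$ (already supplied by Lemma~\ref{lem:stability_BDML}). Given these, the strategy is to control the second moment $\mathbb{E}_{D}[(\mathcal{R}(\mathcal{A},D)-\mathcal{R}_{emp}(\mathcal{A},D))^{2}]$ in terms of $B$, $\beta$, and $n$, and then apply Chebyshev's inequality to convert the moment bound into a high-probability deviation bound.

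First I would derive the loss bound $B = 4KR\Gamma^{2}/d$. Since $\ell(\mathcal{A}_{D},z) = (x_{i}-x_{j})^{\top}M(x_{i}-x_{j}) \le \lambda_{\max}(M)\|x_{i}-x_{j}\|_{2}^{2} \le 4\Gamma^{2}\lambda_{\max}(M)$, and the trace bound $\mathbf{Tr}(M)\le R$ combined with $\kappa(M)\le K$ forces $R \ge d\,\lambda_{\min}(M) \ge d\,\lambda_{\max}(M)/K$, we obtain $\lambda_{\max}(M) \le KR/d$ and hence $\ell \le 4KR\Gamma^{2}/d$. This step is the only place (besides $\beta$) where the bounded-distortion constraint enters, and it is essential for $K$ to appear in the leading factor of the final bound.

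Next I would reproduce the Bousquet--Elisseeff second-moment estimate under the Uniform-Replace-One assumption. The argument expands $(\mathcal{R}-\mathcal{R}_{emp})^{2}$, takes expectation over the training set (and the test point), and uses i.i.d. symmetry to rewrite the cross terms as expectations over coupled training sets $(D, D^{i})$ and $(D, D^{i,j})$. Each single-sample perturbation changes the loss by at most $\beta$ by Definition~\ref{uniformRO}, so the off-diagonal contributions are bounded by $3B\beta$ while the diagonal contributes at most $B^{2}/(2n)$, giving
\[ \mathbb{E}_{D}\!\left[\left(\mathcal{R}(\mathcal{A},D)-\mathcal{R}_{emp}(\mathcal{A},D)\right)^{2}\right] \le \frac{B^{2}}{2n} + 3B\beta. \]
Because our stability notion is strictly stronger than the leave-one-out version used in the original, every inequality in the derivation continues to hold verbatim.

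Finally, Chebyshev's inequality yields, with probability at least $1-\delta$,
\[ \mathcal{R}(\mathcal{A},D) \le \mathcal{R}_{emp}(\mathcal{A},D) + \sqrt{\tfrac{1}{\delta}\!\left(\tfrac{B^{2}}{2n} + 3B\beta\right)}. \]
Substituting $B = 4KR\Gamma^{2}/d$ and pulling $4KR\Gamma^{2}/d$ out of the bracket recovers the first asserted inequality; further substituting $\beta = 4(K+1)R\Gamma^{2}/d$ from Lemma~\ref{lem:stability_BDML} and collecting terms gives the second specialized form. I expect the main obstacle to be the second-moment bound: one has to carefully track the coupling and symmetrization so that the constants ($1/2$ on the variance term and $3$ on the stability term) come out correctly under the replace-one rather than leave-one-out definition. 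The loss bound and the Chebyshev step are then essentially bookkeeping.
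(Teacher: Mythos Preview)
Your proposal is correct and follows essentially the same route as the paper: bound the loss by $B=4KR\Gamma^{2}/d$ via the Rayleigh quotient together with $\lambda_{\max}\le KR/d$, invoke the Bousquet--Elisseeff second-moment estimate $\mathbb{E}_{D}[(\mathcal{R}-\mathcal{R}_{emp})^{2}]\le B^{2}/(2n)+3B\beta$ (the paper quotes this as Lemma~9 of \cite{bousquet2002stability}), and finish with Chebyshev. The paper does not re-derive the second-moment bound either but simply cites it, so your anticipated ``main obstacle'' is handled in the same way.
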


\begin{remark}
There are several interesting things to note regarding to this theorem.

First, it explains our intuitive conjecture that a large distortion would incur overfitting during metric learning. It encourages us to choose a small value of $K$ to improve the generalization ability of $\mathcal{A}$. On the other side, setting $K$ as small as possible is unwise, since it would constrain our hypothesis class too much and thus may increase both the \emph{true risk} and \emph{empirical risk}. Therefore, it suggests choosing moderately small values of $K$ in practice via cross validation.

Second, the generalization error tends to decrease with the increase of the dimension $d$. This phenomenon seems to be a bit counter-intuitive since it implies that our method becomes more stable in higher dimensional feature space. However, this does happen only if the previous assumption holds, \ie, $M$ is full rank. In this case, the increase of the dimension squash the range of the spectrum of $M$ since the trace bound $R$ and distortion bound $K$ are fixed. If $M$ is instead rank-deficient, the above analysis does not stand. In particular, the bound will have a dependency on the rank of $M$ and its perturbation. This means that naively increasing the feature dimension by adding zero will not make the bound tighter. We will illustrate this issue in the appendix.
\end{remark}


\section{Experiments}\label{sect:experiemtns}

\begin{table*}[t]
\begin{center}
\begin{tabular}{@{}c@{\hskip 0.05in}|@{\hskip 0.05in}c@{\hskip 0.05in}|@{\hskip 0.05in}c@{\hskip 0.05in}|@{\hskip 0.05in}c@{\hskip 0.05in}|@{\hskip 0.05in}c@{\hskip 0.05in}|@{\hskip 0.05in}c@{}}
\hline
Dataset & Wine 			& Iris 			& Diabetes 			& Waveform 	& Segment \\ \hline \hline
Euc & 3.46(3.60) 		&  5.11(2.58) 		& 31.09(2.03) 			&  18.87(0.65) 	& 5.61(0.92) \\ \hline
Xing & 4.04(4.00) 		&  6.67(3.11) 		& 32.09(3.56)  			&  16.43(1.00)  	& 5.26(0.60) \\ \hline
LMNN & 3.08(2.07)  		&  4.22(1.95) 		& 29.70(3.20)  			&  18.61(0.72) 	& 3.69(0.70) \\ \hline
ITML & \textbf{1.15}(2.07) 	&  4.44(2.57)  		& 29.96(2.97) 			&  15.94(0.83) & 5.02(0.70) \\ \hline
BoostMetric & 2.31(2.18) 		&  3.56(2.52) 		& 26.78(2.12) 			&  16.86(0.90) & 4.21(0.48) \\ \hline
\textit{p}-BDML & 2.83(1.3) 		&  3.11(2.61) 		& 27.57(2.21) 			&  15.78(0.60) & 4.21(0.79) \\ \hline
\textit{t}-BDML 	& 2.26(2.30) 		&  \textbf{2.44}(1.64)	& \textbf{26.43}(2.30) 	&  \textbf{15.34}(0.72) &  \textbf{3.62}(0.34)
\\ \hline
\end{tabular}
\end{center}
\vspace{-6pt}
\caption[Comparison on UCI datasets.]{Comparison of average test errors (\%) and standard deviations on UCI datasets.}
\label{tbl:UCI}
\end{table*}

We present empirical evaluations of our BDML algorithm on a wide range of tasks, including classification on several UCI datasets~\cite{BacheLichman2013}, domain adaptation on medium-scale datasets~\cite{saenko2010adapting}, and face verification on the large-scale LFW dataset~\cite{LFWTech}. Before presenting the results, we first discuss a practical strategy to speed up the bisection method, since it is sometimes hard to estimate a tight interval of the optimal objective value in advance. Specifically, we select several fixed upper bounds and then solve the convex feasibility problem~(\ref{eq:feasibilityProb}) in parallel. If the trial is successful, we use it to shrink the upper bound, otherwise we shrink the lower bound. This procedure provides us a largely reduced interval with time cost as small as one call of MWU solver. Note that we set parameters via cross validation. The impact of different parameters and runtime are provided in the appendix due to space limits.

\subsection{Classification}\label{sect:classification}

We first conduct classification experiments on several UCI datasets, including Wine, Iris, Diabetes, Segment and Waveform, to validate the effectiveness of our BDML. We randomly split datasets into 70\% for training and 30\% for testing and report the average test errors and standard deviations by repeating the random splits for $10$ times. We compare with the baseline of Euclidean metric and several strong competitors like Xing~\cite{xing2003distance}, LMNN~\cite{weinberger2009distance}, ITML~\cite{davis2007information} and BoostMetric~\cite{shen2009positive}. The neighborhood size of kNN classifier is $3$ and all metric $M$ are initialized as the identity matrix. We carefully set other parameters for these methods via cross validation. The results are listed in Table.~\ref{tbl:UCI}, in which the best ones are bolded. Both our \textit{p}-BDML and \textit{t}-BDML perform well on these datasets. Especially, \textit{t}-BDML is consistently better than \textit{p}-BDML which validates the effectiveness of triplet constraints as suggested by~\cite{weinberger2009distance}.

We then demonstrate how the performance of the kNN classifier varies according to the condition number of the learned metric in Fig.~\ref{fig:ValidationDistortion}. The x-axis is the natural logarithm of the condition number. It is clear from the figure that, the average test errors of both \textit{p}-BDML and \textit{t}-BDML first decrease and then increase when the condition numbers become larger. These results provide strong evidence to support our previous analysis that a largely distorted metric space leads to overfitting and a small distortion may result in underfitting. And our BDML effectively controls the distortion, thus improving the generalization ability.

\begin{figure}[t]
  \centering
    \begin{tabular}{@{\hspace{1mm}}c@{\hspace{0mm}}c@{\hspace{1mm}}}
    {\includegraphics[width=0.5\linewidth]{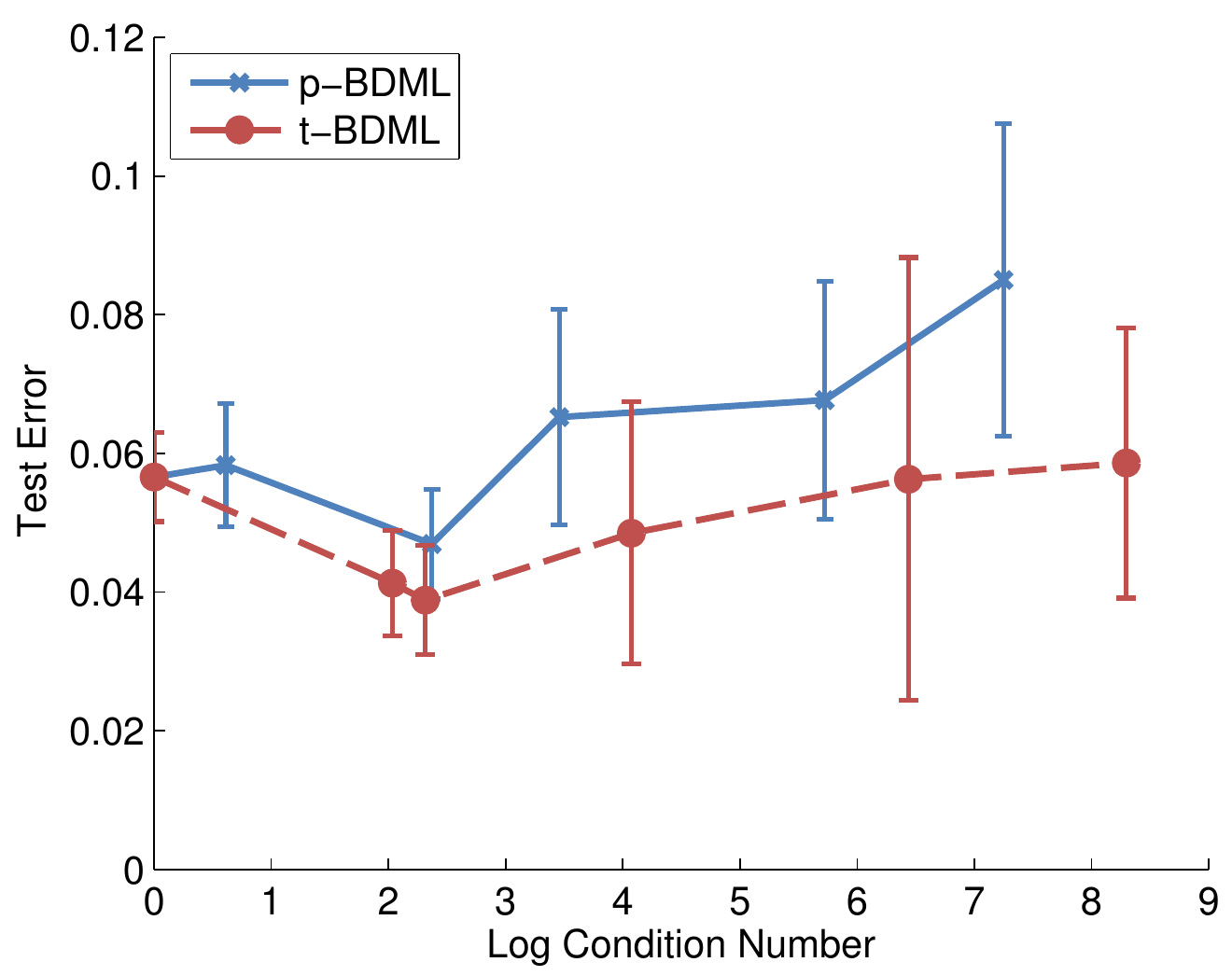}} &
    {\includegraphics[width=0.5\linewidth]{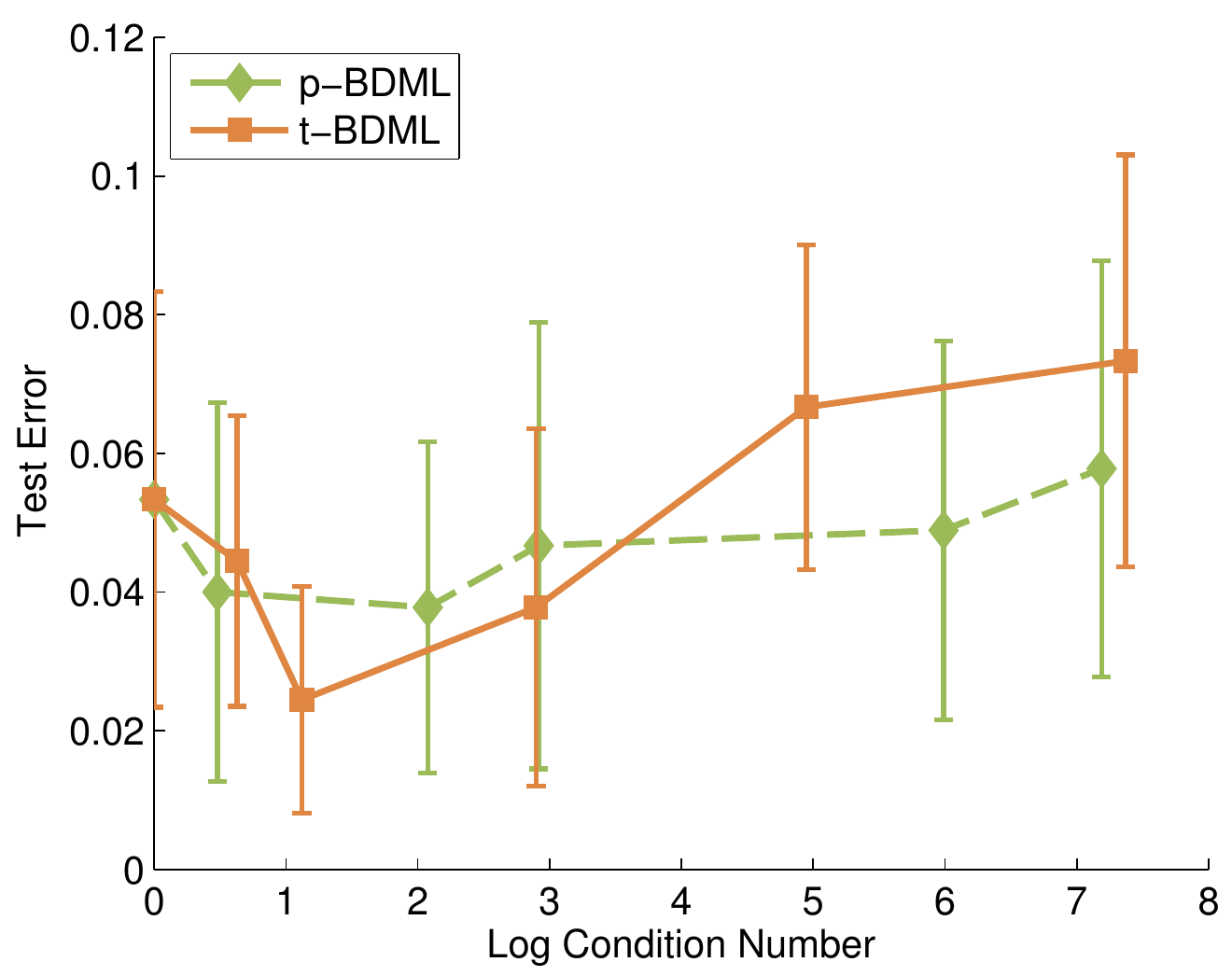}}
  \vspace{-4pt}\\
    {\small (a)} & {\small (b)}
    \end{tabular}
  \vspace{-8pt}
  \caption{Average test error varies with the natural logarithm of condition number on (a) Segment and (b) Iris datasets.}
  \label{fig:ValidationDistortion}
\end{figure}

\subsection{Domain Adaptation}\label{subsect:DomainAdaptation}

We also apply our BDML to domain adaptation problems, under both unsupervised and semi-supervised settings. In the former case, we labeled samples in a source domain for training and want to test the unlabeled samples in the target domain. While in the later setting, apart from labeled samples in a source domain, a small number of labeled samples in the target domain are also accessible during training. We use the same dataset as in~\cite{saenko2010adapting}, which contains 2,533 images of 10 categories from $4$ domains: \emph{\bf C}altech, \emph{\bf A}mazon, \emph{\bf W}ebcam, and \emph{\bf D}slr. We exploit the same 10 categories as~\cite{gong2012geodesic} to all the four domains. Experiments are repeated with 20 fixed train/test splits offered by~\cite{saenko2010adapting}. We set the number of neighbors of kNN to 1 as other methods. Since the original SURF feature is of 800 dimension, we perform pseudometric learning with \textit{t}-BDML and initialize the dimension-reduction mapping via PCA. The size of mapping matrix is set to $30 \times 800$ according to cross-validation.

Table~\ref{tbl:DomainAdaptation} presents the mean test accuracy and standard errors of various metric learning based methods, including LMNN~\cite{weinberger2009distance}, ITML~\cite{saenko2010adapting}, SGF~\cite{gopalan2011domain}, GFK~\cite{gong2012geodesic} and DML-eig~\cite{ying2012distance}, where A $\!\rightarrow\!$ C means the adaptation from source domain A (\ie, Amazon) to the target domain C (\ie, Caltech). And for fair competition, we adopt the best results of GFK under the PCA subspace setting reported by~\cite{gong2012geodesic}. The results in the original Euclidean space are denoted as OrigFeat. In most subtasks of these two settings, our \textit{t}-BDML outperforms than other competitors which demonstrates the strength of the proposed pseudo-metric learning scheme.

\begin{table*}
\begin{center}
{\scriptsize
\begin{tabular}{c|c|c|c|c|c|c|c|c}
\hline
  Methods & A $\rightarrow$ C & A $\rightarrow$ W & C $\rightarrow$ A & C $\rightarrow$ D & D $\rightarrow$ A & D $\rightarrow$ W & W $\rightarrow$ A & W $\rightarrow$ D \\ \hline \hline
  OrigFeat & 22.6(0.3) & 23.5(0.6) & 20.8(0.4) & 22.0(0.6) & 27.7(0.4) & 53.1(0.6) & 20.7(0.6) & 37.3(1.2) \\ \hline
  SGF & 35.3(0.5) & 31.0(0.7) & 36.8(0.5) & 32.6(0.8) & 32.0(0.4) & 66.0(0.5) & 27.5(0.5) & 54.3(1.2) \\ \hline
  GFK & 35.6(0.4) & 34.4(0.9) & \textbf{36.9}(0.4) & \textbf{35.2}(1.0) & 32.5(0.5) & 74.9(0.6) & 31.1(0.8) & 70.6(0.9) \\ \hline
  LMNN & 35.7(0.5) & 32.9(0.8) & 33.8(0.7) & 31.5(1.6) & 33.7(0.4) & 75.1(0.8) & 30.8(0.7) & 67.6(1.0) \\ \hline
  DML-eig & 35.0(0.7) & 28.9(0.7) & 33.7(0.7) & 32.7(1.3) & 33.4(0.3) & 78.2(0.8) & 32.5(0.9) & 72.4(0.6) \\ \hline
  \textit{t}-BDML & \textbf{37.2}(0.3) & \textbf{35.2}(1.0) & 35.2(0.7) & 33.4(1.3) & \textbf{37.1}(0.6) & \textbf{78.6}(0.7) & \textbf{33.2}(0.8) & \textbf{73.8}(0.6) \\ \hline
  \hline
  OrigFeat(semi) & 24.0(0.3) & 31.6(0.6) & 23.1(0.4) & 26.5(0.7) & 31.3(0.7) & 55.5(0.7) & 30.8(0.6) & 44.3(1.0) \\ \hline
  ITML(semi) & 27.3(0.7) & 36.0(1.0) & 33.7(0.8) & 35.0(1.1) & 30.3(0.8) & 55.6(0.7) & 32.3(0.8) & 51.3(0.9) \\ \hline
  SGF(semi) & 37.7(0.5) & 37.9(0.7) & 40.2(0.7) & 36.6(0.8) & 39.2(0.7) & 69.5(0.9) & 38.2(0.6) & 60.6(1.0) \\ \hline
  GFK(semi) & 37.8(0.4) & 53.7(0.8) & 42.0(0.5) & 49.5(0.9) & \textbf{45.0}(0.7) & 78.7(0.5) & 42.8(0.7) & 75.0(0.7) \\ \hline
  LMNN(semi) & 36.6(0.6) & 49.6(0.9) & 43.3(0.5) & 50.3(1.3) & 42.0(0.7) & 78.6(0.7) & 42.3(0.6) & 72.8(1.1) \\ \hline
  DML-eig (semi) & 27.8(0.7) & 40.5(1.0) & 43.3(0.6) & 45.1(1.6) & 43.4(0.6) & 80.5(0.9) & 40.8(0.7) & 76.8(0.9)\\ \hline
  \textit{t}-BDML(semi) & \textbf{38.8}(0.3) & \textbf{55.8}(1.1) & \textbf{44.6}(0.6) & \textbf{54.0}(1.1) & 43.9(0.6) & \textbf{83.8}(0.5) & \textbf{44.8}(0.6) & \textbf{79.2}(0.7) \\ \hline
\end{tabular}}
\end{center}
\vspace{-4pt}
\caption{Comparison of unsupervised (upper part) and semi-supervised (lower part denoted with ``semi'') domain adaptation. Mean test accuracy (\%) and standard error (inside parentheses) are reported.}
\label{tbl:DomainAdaptation}
\end{table*}

\subsection{Face Verification}\label{subsect:FaceVerification}

Finally, we apply our BDML to an unconstrained face verification task, using the large-scale LFW dataset that contains 13,233 face images of 5,749 people. It is  challenging due to the large variations of faces in illumination, expression, pose, resolution, etc. There are 6 standard protocols~\cite{LFWTech} for evaluating results. We use the setting called ``Image-Restricted, Label-Free Outside Data", where we can only access the provided labeled pairs of faces during training. Thus we only compare pseudo-metric learning of \textit{p}-BDML since using triplet constraints would violate this setting. The dateset is organized in 10 folders and each of them contains 300 similar pairs of faces and 300 dissimilar ones. The reported accuracy is obtained via cross validation on the provided 10 folds.

Current state-of-the-art methods under this setting often build various classifiers and combine multiple types of visual descriptors. Since we primarily aim at validating the effectiveness of BDML, we do not carry out intensive feature engineering or build complex similarity measurements. Instead, we use the public ``funneled" SIFT feature\footnote{\burl{http://lear.inrialpes.fr/people/guillaumin/data.php}}
 and regard the learned distance metric as the similarity measure. Since the dimension of raw SIFT feature is too large ($\approx$ 4k), we reduce it to 800 via PCA before pseudo-metric learning with \textit{p}-BDML and set the size of dimension-reduction mapping as $300 \times 800$.

In Table~\ref{tbl:LFW}, we present results of \textit{p}-BDML and other metric learning based methods including Xing~\cite{xing2003distance}, ITML~\cite{davis2007information}, LDML~\cite{guillaumin2009you}, KISSME~\cite{kostinger2012large}, DML-eig~\cite{ying2012distance}, Sub-ITML and Sub-ML~\cite{cao2013similarity}. We also report the average condition number. The ITML methods optimize a logDet regularizer, which yield condition numbers close to one. It suggests that the metric space is not sufficiently distorted for good generalization performance. While the F-norm based regularization methods (\eg, Sub-ML) yield too large distortion, which is not good for generalization. These results support our theoretical analysis in Sec.~\ref{sect:general_bounds} again. In contrast, our BDML method obtains an appropriate condition number, which results in decent generalization performance. Full comparison (including ROC curves) with other non-distance based methods is presented in the appendix.

\addtolength{\tabcolsep}{-1pt}
\begin{table}
\begin{center}
\begin{tabular}{c|c|c|c}
\hline
  Methods 			& Mean Cond. Num. 	& Mean Acc.         	& Std Err.   \\ \hline \hline
  Xing      			& ---               		& 0.7593            	& 0.0059            \\ \hline
  ITML      			& 1.2037            	& 0.7812            	& 0.0045            \\ \hline
  LDML     			& --- 				& 0.7927                	& 0.0060            \\ \hline
  DML-eig   		& 3228.4            	& 0.8127            	& 0.0230            \\ \hline
  Sub-ITML  		& 1.2157            	& 0.8145            	& 0.0046            \\ \hline
  KISSME    		& ---               		& 0.8308            	& 0.0056            \\ \hline
  Sub-ML    		& 1171.397          	& 0.8330            	& 0.0026            \\ \hline
  \textit{p}-BDML      	& 8.3771            	& \textbf{0.8632}   	& \textbf{0.0022}   \\ \hline
\end{tabular}
\end{center}
\caption{Results of face verification on LFW, where ``---" means not applicable due to no public implementation or an unbounded condition number.}
\label{tbl:LFW}
\end{table}

\section{Conclusions}

In this paper, we propose the bounded-distortion metric learning (BDML), which well-balances the fitness to data and the distortion of metric embedding. For Mahalanobis metric space, BDML leads to a bounded condition number metric learning method, which possesses intriguing properties. We propose an efficient learning algorithm and further provide theoretical analysis, which explains why the distortion is a key ingredient to ensure good generalization ability. Also, we generalize to pseudo-metric learning and propose an approximate solver based on the semidefinite relaxation and a randomised algorithm. Empirical results validate that our BDML leads to both better generalization and well-conditionness. In future, we would like to extend the distortion to non-Mahalanobis metric and design corresponding approximation algorithms.

\clearpage

\section{Appendix}\label{sect:appx}

\subsection{Proofs in Section \ref{sect:model}}

\textbf{Proof of Proposition \ref{pro:distortion}}.

\begin{proof}

Note that for any two different points $x$, $y$, we have,
\begin{align}
\frac{{d_M}(f(x),f(y))}{{d_I}(x,y)} = \frac{(x-y)^{\top}M(x-y)}{(x-y)^{\top}(x-y)}. \nonumber
\end{align}
It is easy to see that above equation is the Rayleigh quotient of the PSD matrix $M$. Therefore,
\begin{align}
\lambda_{\min} \le \frac{{d_M}(f(x),f(y))}{{d_I}(x,y)} \le \lambda_{\max}, \nonumber
\end{align}
where $\lambda_{\min}$ and $\lambda_{\max}$ are the minimum and maximum eigenvalue of $M$ respectively. According to the Definition~\ref{embedding}, setting $r = \lambda_{\min}$, we can find that for any $c \ge \frac{\lambda_{\max}}{\lambda_{\min}}$, it is always true,
\begin{align}
r \cdot d_{\mathcal{X}}(x,y) \le d_{\mathcal{Y}}(f(x),f(y)) \le cr \cdot d_{\mathcal{X}}(x,y). \nonumber
\end{align}
Hence the distortion of the metric embedding $f_{I \to M}$ is $\inf\{c | c \ge \frac{\lambda_{\max}}{\lambda_{\min}}\} = \kappa(M)$.

\end{proof}


\subsection{Proofs in Section \ref{sect:solver}}

\textbf{Proof of Lemma \ref{lem:feasibilityEquivalence}}.

\begin{proof}

If $Y^{*}$ is a feasible solution of problem~(\ref{eq:feasibilityProbSDP}), then ${J_i} \bullet Y^{*} \ge {h_i}, \forall i$. Since $\forall i, 0 \le p_i \le 1$, we have $\sum\nolimits_{i = 1}^m {p_i\left( {{J_i} \bullet Y^{*} - {h_i}} \right)}  \ge 0$. Hence, $Y^{*}$ is also a feasible solution of problem~(\ref{eq:feasibilityProbORACLE}).

If there exists a probability vector $\mathbf{p}$ such that the problem~(\ref{eq:feasibilityProbORACLE}) is infeasible, then for all $Y \in \mathbb{P}_{R}^{3d+1}$, we have $\sum\nolimits_{i = 1}^m {p_i\left( {{J_i} \bullet Y - {h_i}} \right)} < 0$. Therefore, the original problem~(\ref{eq:feasibilityProbSDP}) is also infeasible, since otherwise there exists a solution $\tilde Y$ such that $\tilde Y \in \mathbb{P}_{R}^{3d+1}$ and $\sum\nolimits_{i = 1}^m {p_i\left( {{J_i} \bullet {\tilde Y} - {h_i}} \right)} \ge 0$.

\end{proof}


\textbf{Proof of Theorem \ref{the:correctnessMWU}}.

\begin{proof}

In the $t$-th round, we run the \textsc{Oracle} with a probability distribution $p^{(t)}$ as input.

If the \textsc{Oracle} declares the problem~(\ref{eq:feasibilityProbORACLE}) is infeasible, then due to Lemma.~\ref{lem:feasibilityEquivalence}, the original problem is correctly concluded as infeasible.

On the other hand, if this situation never happens during the iteration, i.e., for any round $t$, \textsc{Oracle} succeeds to find a solution $Y^{(t)}$ to problem~(\ref{eq:feasibilityProbORACLE}), then we can get the following inequality, $\forall t = 1, 2, ..., T$,
\begin{align}
\sum\limits_{i = 1}^m {p_i^{(t)}\eta _i^{(t)}}  = \frac{1}{\rho }\sum\limits_{i = 1}^m {p_i^{(t)}\left( {{J_i} \bullet {Y^{(t)}} - {h_i}} \right)} \ge 0, \nonumber
\end{align}
Then, equipped with the above inequality and Theorem 2 in~\cite{kale2007efficient}, we have, for any constraint $i$,
{\small
\begin{align}
0 & \le \sum\limits_{t = 1}^T {\eta _i^{(t)}} + \varepsilon \sum\limits_{t = 1}^T {\left| {\eta _i^{(t)}} \right|}  + \frac{{\ln (m)}}{\varepsilon} \nonumber \\
& = \frac{1}{\rho }\sum\limits_{t = 1}^T {\left( {{J_i} \bullet {Y^{(t)}} - {h_i}} \right)} \nonumber \\
& + \frac{\varepsilon }{\rho }\sum\limits_{t = 1}^T {\left| {{J_i} \bullet {Y^{(t)}} - {h_i}} \right|} + \frac{{\ln (m)}}{\varepsilon } \nonumber \\
& = \frac{{\left( {1 + \varepsilon } \right)}}{\rho }\sum\limits_{t = 1}^T {\left( {{J_i} \bullet {Y^{(t)}} - {h_i}} \right)} \nonumber \\
& + \frac{{2\varepsilon }}{\rho }\sum\limits_{t \in \mathcal{T}_{-}} {\left| {{J_i} \bullet {Y^{(t)}} - {h_i}} \right|}  + \frac{{\ln (m)}}{\varepsilon } \nonumber \\
& \le \frac{{\left( {1 + \varepsilon } \right)}}{\rho }\sum\limits_{t = 1}^T {\left( {{J_i} \bullet {Y^{(t)}} - {h_i}} \right)} + 2\varepsilon T + \frac{{\ln (m)}}{\varepsilon }. \nonumber
\end{align}
}
Here $\mathcal{T}_{-}$ denotes the set of index $t$ when ${{J_i} \bullet {Y^{(t)}} - {h_i}} < 0$. Divided by $T$ on both sides of the above inequality and with some rearrangement, we can obtain,
{\small
\begin{align}
{J_i} \bullet \left( {\frac{1}{T}\sum\limits_{t = 1}^T {{Y^{(t)}}} } \right) - {h_i} \ge  - \frac{\rho }{{1 + \varepsilon }}\left( {2\varepsilon  + \frac{{\ln (m)}}{{\varepsilon T}}} \right). \nonumber
\end{align}
}
Note that $\bar Y = (\sum\nolimits_{t = 1}^T {{Y^{(t)}}} )/T$ is the solution returned by the multiplicative weights update method. Now, if we set $\varepsilon  = \frac{{{C_1}\delta }}{\rho }$, ${T = \frac{{{C_2}{\rho ^2}\ln (m)}}{{{\delta ^2}}}}$, where $C_1, C_2$ are positive real numbers, then we have,
{\small
\begin{align}
- \frac{\rho }{{1 + \varepsilon }}\left( {2\varepsilon  + \frac{{\ln (m)}}{{\varepsilon T}}} \right) =  - \frac{\rho }{{\rho  + {C_1}\delta }}(2{C_1} + \frac{1}{{{C_1}{C_2}}})\delta. \nonumber
\end{align}
}
With some calculation, we can find that if we choose $0 < C_1 < 1/2$ and set ${{C_2} =  - \frac{1}{{{C_1}(2{C_1} - 1)}}}$, e.g., $C_1 = 1/4, C_2 = 8$, then ${J_i} \bullet \bar Y - {h_i} \ge  -\delta / (1+\frac{\delta}{4\rho}) \ge- \delta$ for any constraint $i$.

\end{proof}


\subsection{Proofs in Section \ref{sect:pseudo_metric}}

\textbf{Proof of Proposition \ref{pro:mapping}}.

\begin{proof}

We first restate problem \eqref{eq:BDML_pseudo_metric} as below,
\begin{subequations}
\begin{align}
{\mathop {\min }\limits_{Q \in \mathbb{R}^{q \times d}} } \qquad & { \frac{1}{n} \sum\nolimits_{(i,j) \in \mathcal{S}} {{X_{ij}} \bullet (Q^{\top} \Lambda Q)}} \nonumber \\
{s.t.} \qquad & {{{X_{ij}} \bullet (Q^{\top} \Lambda Q)} \ge \mu}, \qquad {\forall (i,j) \in \mathcal{I}},  \label{margin_cons} \\
& QQ^{\top} = I. \label{orthogonal_cons}
\end{align}
\end{subequations}

We can denote the vectorization of $Q$ as $\zeta$ and rewrite the above problem as the following standard formulation of quadratic constrained quadratic programming (QCQP),
\begin{subequations}
\begin{align}
{\mathop {\min }\limits_{\zeta \in \mathbb{R}^{qd \times 1}} } \qquad & {\frac{1}{n} \sum\nolimits_{(i,j) \in \mathcal{S}} {\zeta^{\top}\tilde{X}_{ij}\zeta}} \nonumber  \\
{s.t.} \qquad & {\zeta^{\top}\tilde{X}_{ij}\zeta} \ge \mu, \qquad & {\forall (i,j) \in \mathcal{I}}, \label{margin_cons_stand} \\
& {\zeta^{\top}A_{uv}\zeta} = b_{uv}, & {\forall (u, v) \in \mathcal{C}}, \label{orthogonal_cons_stand}
\end{align}
\end{subequations}
where the set of margin constraints in \eqref{margin_cons} corresponds to the one in \eqref{margin_cons_stand} and the set of orthogonal constraints in \eqref{orthogonal_cons} corresponds to the one in \eqref{orthogonal_cons_stand}. Specifically, $\tilde{X}_{ij} = X_{ij} \otimes \Lambda$ and $\otimes$ stands for Kronecker product. And we denote the index set of the upper triangular part of a $q$-dimensional squared matrix as $\mathcal{C} = \{(u, v) \in [q] \times [q] | u \le v \}$ where $[q] = \{1, 2, \dots, q\}$. Then for each element ($u, v$) of $\mathcal{C}$, we have $A_{uv}$ is a block diagonal matrix which contains $d$ identical blocks $B_{uv} \in \mathbb{R}^{q \times q}$ as following,
{\small
\begin{align}
A_{uv} = \left( {\begin{array}{*{20}{c}}
{B_{uv}}&{}&{}&{}\\
{}&{B_{uv}}&{}&{}\\
{}&{}&{\ddots}&{}\\
{}&{}&{}&{B_{uv}}
\end{array}} \right) \nonumber
\end{align}
}
where
{\small
\begin{align}
&~~~~~~~~~~~~~~~u~~~~v \nonumber\\
{B_{uv}} = &\left( {\begin{array}{*{20}{c}}
{}&{}&{\vdots}&{}&{}&{}&{}\\
{\dots}&{\vdots}&{1}&{\dots}&{}&{}&{}\\
{\dots}&{1}&{\vdots}&{\dots}&{}&{}&{}\\
{}&{\vdots}&{}&{}&{}&{}&{}\\
{}&{}&{}&{}&{}&{}&{}
\end{array}} \right)
\begin{array}{*{20}{c}}
\vspace{-0.15cm}
{}\\
{}\\
{u}\\
{v}\\
{}\\
{}\\
\end{array} \nonumber
\end{align}
}
\ie, $(u, v)$ and $(v, u)$-th entries of $B$ are 1 while others are 0. And $b_{uv} = 2$ if $u = v$, otherwise $b_{uv} = 0$.

Based on~\cite{luo2010semidefinite}, to derive the SDP relaxation, we can easily observe the following,
\begin{align}
\zeta^{\top}\tilde{X}_{ij}\zeta = \mathbf{Tr}(\tilde{X}_{ij}\zeta\zeta^{\top}) = \tilde{X}_{ij} \bullet \zeta\zeta^{\top}. \nonumber
\end{align}
Thus setting $\tilde{Q} = \zeta\zeta^{\top}$, we can write the equivalent form of the above QCQP as below,
\begin{align}
{\mathop {\min }\limits_{\zeta \in \mathbb{R}^{qd \times 1}} } \qquad & {\frac{1}{n} \sum\nolimits_{(i,j) \in \mathcal{S}} {\tilde{X}_{ij} \bullet \tilde{Q}}} \nonumber \\
{s.t.} \qquad & {\tilde{X}_{ij} \bullet \tilde{Q}} \ge \mu, \qquad & {\forall (i,j) \in \mathcal{I}}, \nonumber \\
& {A_{uv} \bullet \tilde{Q}} = b_{uv}, & {\forall (u, v) \in \mathcal{C}}, \nonumber \\
& rank(\tilde{Q}) = 1. \nonumber
\end{align}
By removing the last rank constraint, we can obtain the desired SDP relaxation.

\end{proof}

\begin{remark}
Note that the orthogonal constraints in the original QCQP problem imply that $\mathbf{Tr}(\tilde{Q}) = q$ in the SDP relaxation problem.
\end{remark}


Before obtaining our \textbf{Theorem \ref{thm:tailbound}}, we need to introduce three lemmas as below. First, we state the Lemma 1 in \cite{luo2007approximation} as below which gives the polynomial tail bound of the left-side inequality $\xi^{\top} H \xi < \gamma \mathbb{E}[\xi^{\top} H \xi]$. Reader can refer to the paper for details of the proof.

\begin{lemma}[\textbf{Left-side Polynomial Tail Bound}]
Let $H \in \mathbb{S}_{+}^{d}, Z \in \mathbb{S}_{+}^{d}$. Suppose $\xi \in \mathbb{R}^d$ is a random vector generated from the real-valued normal distribution $\mathcal{N}(0, Z)$. Then for any $\gamma > 0$,
\begin{align}
Prob \left( \xi^{\top} H \xi < \gamma \mathbb{E}[\xi^{\top} H \xi] \right) < \max \{ \sqrt{\gamma}, \frac{2(\bar{r}-1)\gamma}{\pi - 2} \}, \nonumber
\end{align}
where $\bar{r} = \min \{rank(H), rank(Z)\}$.
\end{lemma}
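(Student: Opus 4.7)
The plan is to reduce to a canonical weighted chi-squared representation and then handle the rank-$1$ and higher-rank regimes separately. Writing $\xi = Z^{1/2}\eta$ with $\eta \sim \mathcal{N}(0, I_d)$, we have $\xi^{\top} H \xi = \eta^{\top} A \eta$ for $A := Z^{1/2}HZ^{1/2} \succeq 0$ with $\mathrm{rank}(A) \le \bar{r}$. Diagonalizing $A$ and absorbing the orthogonal change of basis into $\eta$ by rotational invariance of the standard Gaussian, we obtain $\xi^{\top} H \xi = \sum_{i=1}^{\bar{r}} \lambda_i \eta_i^2$ with $\lambda_i > 0$ the nonzero eigenvalues of $A$ and $\eta_i$ iid $\mathcal{N}(0,1)$. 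After normalizing by $\sum_j \lambda_j$ and setting $\mu_i := \lambda_i/\sum_j \lambda_j$, the claim reduces to: for every strictly positive probability vector $\mu$ of length $\bar{r}$ and every $\gamma > 0$, $\Pr(S < \gamma) < \max\{\sqrt{\gamma}, \tfrac{2(\bar{r}-1)\gamma}{\pi-2}\}$, where $S := \sum_{i=1}^{\bar{r}} \mu_i \eta_i^2$ has $\mathbb{E}[S] = 1$.

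The base case $\bar{r} = 1$ follows from a direct normal CDF estimate: $\Pr(S < \gamma) = 2\Phi(\sqrt{\gamma}) - 1 \le 2\phi(0)\sqrt{\gamma} = \sqrt{2\gamma/\pi} < \sqrt{\gamma}$, which matches $\max\{\sqrt{\gamma}, 0\}$. For $\bar{r} \ge 2$ the qualitative gain comes from the fact that the density of $S$ at the origin, which diverges like $s^{-1/2}$ when $\bar{r} = 1$, becomes bounded after a single additional chi-squared convolution, turning $O(\sqrt{\gamma})$ mass near the origin into $O(\gamma)$ mass; this is what allows a linear-in-$\gamma$ bound to coexist with the $\sqrt{\gamma}$ bound via the $\max$.

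The main work for $\bar{r} \ge 2$ is to establish this linear tail bound with the sharp constant. I would induct on $\bar{r}$, with the base $\bar{r} = 2$ argued by polar decomposition: writing $(\eta_1, \eta_2) = (R\cos\theta, R\sin\theta)$ with $R^2 \sim \chi_2^2$ (exponential with mean $2$) and $\theta \sim \mathrm{Unif}[0, 2\pi)$ independent, one gets $\Pr(S < \gamma) = \mathbb{E}_\theta[1 - e^{-\gamma/(2g(\theta))}]$ with $g(\theta) = \mu_1\cos^2\theta + \mu_2\sin^2\theta = \tfrac{1}{2} + \tfrac{\mu_1 - \mu_2}{2}\cos(2\theta)$. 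Splitting the angular integral into a region where $g$ is close to its mean $1/2$ (where $1 - e^{-\gamma/(2g)} \approx \gamma/(2g)$ contributes linearly in $\gamma$) and its complement (where the integrand is bounded by a constant and the contribution is controlled by the angular measure of the bad region), and then optimizing over $(\mu_1, \mu_2)$, produces the constant $\tfrac{2}{\pi-2}$ or lets the $\sqrt{\gamma}$ branch take over in the near-degenerate regime where one weight vanishes. The inductive step $\bar{r} \to \bar{r}+1$ then conditions on $\eta_{\bar{r}+1}$, applies the inductive hypothesis to the rescaled partial sum $\sum_{i=1}^{\bar{r}} \mu_i' \eta_i^2$, and integrates out $\eta_{\bar{r}+1}$, with the added chi-squared factor contributing an additional unit to the coefficient $\bar{r}-1$.

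The main obstacle is the sharp bookkeeping of constants through the induction. A naïve conditioning-and-integration loses a multiplicative factor at each step, which would blow the coefficient $\bar{r}-1$ up to something exponential in $\bar{r}$; recovering clean linear growth requires either exploiting cancellations in the explicit chi-squared density convolution or a clean variational characterization of the worst-case weight profile at each rank. A secondary technical point is the boundary of the simplex, where some $\mu_i \to 0$ and the effective rank drops, causing the linear bound to degenerate; this is precisely why the lemma takes the maximum with $\sqrt{\gamma}$ rather than relying on the linear bound alone, and the proof must verify that the $\sqrt{\gamma}$ branch cleanly absorbs those degenerate weight configurations.
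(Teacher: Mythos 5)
Your reduction to $S=\sum_{i=1}^{\bar r}\mu_i\eta_i^2$ with $\sum_i\mu_i=1$ and your treatment of $\bar r=1$ are correct, but the core of the lemma --- the linear-in-$\gamma$ bound with coefficient $\tfrac{2(\bar r-1)}{\pi-2}$ for $\bar r\ge 2$ --- is not established. You propose an induction on $\bar r$ and then yourself observe that naive conditioning loses a multiplicative factor at each step, which would make the coefficient exponential in $\bar r$ rather than linear, and you offer no mechanism that repairs this. That is not a bookkeeping issue to be deferred; it is the entire content of the statement, and the induction is in fact the wrong tool: the factor $\bar r-1$ does not arise from peeling off coordinates one at a time. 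Likewise, the $\bar r=2$ polar-coordinate base case is only sketched ("optimizing over $(\mu_1,\mu_2)$ produces the constant $\tfrac{2}{\pi-2}$ or lets the $\sqrt\gamma$ branch take over") without the computation that would certify the constant. As written, the proposal is a correct setup plus an accurate diagnosis of the difficulty, but not a proof.

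For comparison: the paper does not prove this lemma at all; it imports it verbatim as Lemma 1 of \cite{luo2007approximation} and refers the reader there. The argument in that reference is short and non-inductive. After your reduction, order $\mu_1\ge\mu_2\ge\cdots\ge\mu_{\bar r}>0$ and split on the size of $\mu_1$. If $\mu_1\ge 2/\pi$, then $Prob(S<\gamma)\le Prob(\mu_1\eta_1^2<\gamma)\le\sqrt{2\gamma/(\pi\mu_1)}\le\sqrt\gamma$. If $\mu_1<2/\pi$, keep only the two largest terms and use the uniform density bound $f_{\mu_1\eta_1^2+\mu_2\eta_2^2}(s)\le\frac{1}{2\pi\sqrt{\mu_1\mu_2}}\int_0^s\frac{dx}{\sqrt{x(s-x)}}=\frac{1}{2\sqrt{\mu_1\mu_2}}$, so that $Prob(S<\gamma)\le\gamma/(2\sqrt{\mu_1\mu_2})$; pigeonhole over the $\bar r-1$ remaining weights gives $\mu_2\ge(1-\mu_1)/(\bar r-1)>\tfrac{\pi-2}{\pi(\bar r-1)}$ and hence $\sqrt{\mu_1\mu_2}\ge\mu_2$, which yields $Prob(S<\gamma)<\tfrac{\pi(\bar r-1)\gamma}{2(\pi-2)}\le\tfrac{2(\bar r-1)\gamma}{\pi-2}$. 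Note that the linear dependence on $\bar r-1$ comes from this pigeonhole over the non-top eigenvalues, not from an induction, and the degenerate regime you worried about ($\mu_i\to 0$) is exactly the first case, absorbed by the $\sqrt\gamma$ branch. If you want to salvage your write-up, replace the induction by this two-case argument.
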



Then we derive a right-side exponential tail bound as below.

\begin{lemma}[\textbf{Right-side Exponential Tail Bound}]
Let $H \in \mathbb{S}_{+}^{d}, Z \in \mathbb{S}_{+}^{d}$. Suppose $\xi \in \mathbb{R}^d$ is a random vector generated from the real-valued normal distribution $\mathcal{N}(0, Z)$. Then for any $\gamma \ge 1$,
\begin{align}
Prob \left( \xi^{\top} H \xi > \gamma \mathbb{E}[\xi^{\top} H \xi] \right) < \bar{r} \exp{\left(-\frac{1}{2}\left(\gamma-\sqrt{2\gamma - 1}\right)\right)}, \nonumber
\end{align}
where $\bar{r} = \min \{rank(H), rank(Z)\}$.
\end{lemma}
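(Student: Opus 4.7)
The plan is to reduce the Gaussian quadratic form to a weighted sum of one-degree-of-freedom chi-squared variables, then combine a pigeonhole argument with a standard scalar chi-squared tail bound. First I would write $\xi = Z^{1/2}\eta$ for $\eta \sim \mathcal{N}(0, I)$, so that $\xi^{\top} H \xi = \eta^{\top} (Z^{1/2} H Z^{1/2}) \eta$. The matrix $M := Z^{1/2} H Z^{1/2}$ is PSD and has rank at most $\bar r = \min(\mathrm{rank}(H), \mathrm{rank}(Z))$. Diagonalizing $M = U\Lambda U^{\top}$ with positive eigenvalues $\lambda_1, \dots, \lambda_{\bar r}$ and letting $\eta' = U^{\top}\eta$ (which is again standard normal), one gets
$$\xi^{\top} H \xi \;=\; \sum_{i=1}^{\bar r} \lambda_i \chi_i^2,$$
where the $\chi_i^2 := (\eta'_i)^2$ are i.i.d.\ chi-squared with one degree of freedom. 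In particular $\mathbb{E}[\xi^{\top} H \xi] = \mathbf{Tr}(HZ) = \sum_{i=1}^{\bar r}\lambda_i$.

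The second ingredient is a pigeonhole-style observation: if every $\chi_i^2 \le \gamma$, then $\sum_i \lambda_i \chi_i^2 \le \gamma \sum_i \lambda_i$, so
$$\bigl\{\xi^{\top} H\xi > \gamma\, \mathbb{E}[\xi^{\top} H\xi]\bigr\} \;\subseteq\; \bigcup_{i=1}^{\bar r}\bigl\{\chi_i^2 > \gamma\bigr\}.$$
A union bound then gives $Prob(\xi^{\top} H \xi > \gamma\, \mathbb{E}[\xi^{\top} H \xi]) \le \bar r \cdot Prob(\chi_1^2 > \gamma)$, which is where the factor $\bar r$ in the claimed bound comes from.

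The last step is to calibrate a scalar chi-squared tail inequality so that its exponent matches $\tfrac{1}{2}(\gamma - \sqrt{2\gamma - 1})$. I would invoke the Laurent--Massart inequality $Prob(\chi_1^2 \ge 1 + 2\sqrt{x} + 2x) \le e^{-x}$ (valid for $x \ge 0$) and solve $1 + 2\sqrt{x} + 2x = \gamma$ in the nonnegative root $\sqrt{x} = (\sqrt{2\gamma - 1} - 1)/2$, which is real and nonnegative exactly for $\gamma \ge 1$; squaring gives $x = \tfrac{1}{2}(\gamma - \sqrt{2\gamma - 1})$, and therefore $Prob(\chi_1^2 > \gamma) \le \exp\bigl(-\tfrac{1}{2}(\gamma - \sqrt{2\gamma - 1})\bigr)$. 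Combining with the union bound of the previous paragraph yields the stated inequality. The one non-mechanical step is this calibration: naively optimizing the Chernoff bound $(1-2t)^{-1/2} e^{-t\gamma}$ over $t \in (0,1/2)$ produces the sharper but differently shaped bound $\sqrt{\gamma}\, e^{-(\gamma-1)/2}$, so matching the precise functional form $\tfrac{1}{2}(\gamma - \sqrt{2\gamma-1})$ requires either invoking Laurent--Massart directly or performing the suboptimal $t$-choice by hand; this is the main (minor) obstacle in an otherwise routine argument.
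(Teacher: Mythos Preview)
Your proposal is correct and follows essentially the same route as the paper: reduce $\xi^{\top}H\xi$ to a weighted sum of $\bar r$ independent $\chi_1^2$ variables via a square-root/eigendecomposition of the covariance, apply the pigeonhole inclusion $\{\sum_i \lambda_i \chi_i^2 > \gamma\sum_i \lambda_i\} \subseteq \bigcup_i \{\chi_i^2 > \gamma\}$ with a union bound, and then invoke the Laurent--Massart one-degree tail bound with exactly the calibration $x = \tfrac{1}{2}(\gamma - \sqrt{2\gamma-1})$ that you carried out. The only cosmetic difference is that the paper factors $Z = UU^{\top}$ with $U \in \mathbb{R}^{d\times r}$ rather than using $Z^{1/2}$, but the resulting decomposition and all subsequent steps are identical.
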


\begin{proof}

Let $r = rank(Z)$. Since $Z \in \mathbb{S}_{+}^{d}$, then we can write $Z = UU^{\top}$ for some $U \in \mathbb{R}^{d \times r}$. We consider the eigen decomposition of the real symmetric matrix $U^{\top}HU = L D L^{\top} = \sum_{i = 1}^{r} \lambda_i l_i l_i^{\top}$, where $L = \left[ l_1, l_2, \dots, l_r \right] \in \mathbb{R}^{r \times r}$ is an orthogonal matrix and $D = diag \{\lambda_1, \lambda_2, \dots, \lambda_r \}$ with $\lambda_1 \ge \lambda_2 \ge \cdots \ge \lambda_r \ge 0$. Note that $\lambda_i = 0$ for all $i > \bar{r}$ due to the fact that $U^{\top}HU$ has rank at most $\bar{r}$. Denoting $\bar{\xi} \sim \mathcal{N}(0, I_r)$, then we can easily check that $U\bar{\xi} \sim \mathcal{N}(0, Z)$, \ie, $U\bar{\xi}$ is statistically identical to $\xi$.

Therefore, we have that,
\begin{align}
Prob \left( \xi^{\top} H \xi > \gamma \mathbb{E}[\xi^{\top} H \xi] \right) & = Prob \left( \bar{\xi}^{\top} U^{\top} H U \bar{\xi} > \gamma \mathbb{E}[\bar{\xi}^{\top} U^{\top} H U \bar{\xi}] \right) \nonumber \\
& = Prob \left( \sum_{i=1}^{\bar{r}} \lambda_i {(l_i^{\top}\bar{\xi})^2} > \gamma \mathbb{E}[\sum_{i=1}^{\bar{r}} \lambda_i {(l_i^{\top}\bar{\xi})^2}] \right) \nonumber
\end{align}
Denoting $u_i = l_i^{\top}\bar{\xi}$, we have that $\mathbb{E}[u_i] = 0$ and $\mathbb{E}[u_i^2] = 1$, \ie, $u_i$ is a standard normal variable. Hence,
\begin{align}
Prob \left( \xi^{\top} H \xi > \gamma \mathbb{E}[\xi^{\top} H \xi] \right) & = Prob \left( \sum_{i=1}^{\bar{r}} \lambda_i{u_i^2} > \gamma \mathbb{E}[\sum_{i=1}^{\bar{r}} \lambda_i{u_i^2}] \right) \nonumber \\
& = Prob \left( \sum_{i=1}^{\bar{r}} \lambda_i{u_i^2} > \gamma \sum_{i=1}^{\bar{r}} \lambda_i \right) \nonumber \\
& = Prob \left( \sum_{i=1}^{\bar{r}} \bar{\lambda}_i{u_i^2} > \gamma \right), \nonumber
\end{align}
where $\bar{\lambda}_i = {\lambda_i}/{\sum_{i=1}^{\bar{r}} \lambda_i}$ for $i = 1,\dots,\bar{r}$. Note that $\sum_{i=1}^{\bar{r}} \bar{\lambda}_i = 1$ and $\bar{\lambda}_1 \ge \bar{\lambda}_2 \ge \cdots \ge \bar{\lambda}_r \ge 0$.
Then
\begin{align}
Prob \left( \sum_{i=1}^{\bar{r}} \bar{\lambda}_i{u_i^2} > \gamma \right) & = 1 - Prob \left( \sum_{i=1}^{\bar{r}} \bar{\lambda}_i{u_i^2} \le \gamma \right) \nonumber \\
& \le 1 - Prob \left( {u_1^2} \le \gamma ~~ \& ~~ \dots ~~ \& ~~ {u_{\bar{r}}^2} \le \gamma \right) \nonumber \\
& = Prob \left( {u_1^2} > \gamma ~~ || ~~ \dots ~~ || ~~ {u_{\bar{r}}^2} > \gamma \right) \nonumber \\
& \le \sum_{i=1}^{\bar{r}} Prob \left( {u_i^2} > \gamma \right) \nonumber \\
& \le \bar{r} \exp{\left(-\frac{1}{2}\left(\gamma-\sqrt{2\gamma - 1}\right)\right)}  \nonumber
\end{align}
Note that the last step is due to the inequality in Lemma (\ref{chi_square_tail_bound_3}) and the fact that ${u_i^2}$ is a $\chi^2$ random variable with 1 degree of freedom.

\end{proof}


At last, we derive another two-side exponential tail bound for our own purpose.

\begin{lemma}[\textbf{Two-side Exponential Tail Bound}]
Let $\tilde{Q}^{*} \in \mathbb{S}_{+}^{qd}$ be the optimal solution of problem \eqref{eq:BDML_pseudometric_relax}, and $A_{uv} \in \mathbb{S}^{qd}$ be any constraint matrix corresponding to index set $\mathcal{C}$ in problem \eqref{eq:BDML_pseudometric_relax}. Suppose $\xi \in \mathbb{R}^{qd}$ is a random vector generated from the real-valued normal distribution $\mathcal{N}(0, \tilde{Q}^{*})$. Then for any $\epsilon \ge 0$
\begin{align}
Prob \left( |\xi^{\top} A_{uv} \xi - \mathbb{E}[\xi^{\top} A_{uv} \xi]| \ge \epsilon \right)
\le \bar{r} \left[ \exp{\left(-\frac{(\tau - 1)^2}{4}\right)} + \exp{\left(-\frac{\epsilon^2}{8\bar{r}dq^2}\right)} \right], \nonumber
\end{align}
where $\bar{r} = \min \{rank(A_{uv}), rank(\tilde{Q}^{*})\}$ and $\tau = \left(\frac{\epsilon}{q}\sqrt{\frac{2}{\bar{r}d}} + 1\right)^{\frac{1}{2}}$.
\end{lemma}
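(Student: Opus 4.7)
The plan is to reduce $\xi^\top A_{uv}\xi$ to a signed linear combination of independent $\chi_1^2$ random variables and then bound its two-sided deviation, attacking the upper tail by pigeonhole and the lower tail by a Chernoff/MGF argument; the two sides genuinely require different tools because $A_{uv}$ is only symmetric (not PSD) so its eigenvalues are signed, whereas in Lemma 2 one could lean on positivity throughout.

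I would begin by mirroring the spectral reduction used in the proof of the right-side exponential bound: factor $\tilde Q^* = UU^\top$ with $U\in\mathbb R^{qd\times r}$, $r=\text{rank}(\tilde Q^*)$, write $\xi=U\bar\xi$ with $\bar\xi\sim\mathcal N(0,I_r)$, and diagonalize the real symmetric $r\times r$ matrix $U^\top A_{uv}U = LDL^\top$ with $D=\text{diag}(\lambda_1,\dots,\lambda_r)$. Because $\text{rank}(U^\top A_{uv}U)\le\bar r$ we may take $\lambda_i=0$ for $i>\bar r$, and setting $Y_i = l_i^\top\bar\xi$ (i.i.d.\ standard normal) gives
\begin{equation*}
\xi^\top A_{uv}\xi - \mathbb E[\xi^\top A_{uv}\xi] \;=\; \sum_{i=1}^{\bar r}\lambda_i(Y_i^2-1).
\end{equation*}

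For the upper deviation $\{\sum_i\lambda_i(Y_i^2-1)\ge\epsilon\}$ I would pigeonhole: some $\lambda_i(Y_i^2-1)\ge\epsilon/\bar r$, which together with the operator/Frobenius-type bound $\max_i|\lambda_i|\le q\sqrt{d/(2\bar r)}$ (obtained from $\|\tilde Q^*\|_\text{op}\le\text{Tr}(\tilde Q^*)=q$ and the block structure of $A_{uv}$) forces $Y_i^2\ge\tau^2$ for the exact $\tau$ in the lemma. The single-variable tail $P(Y^2\ge\tau^2)\le\exp(-(\tau-1)^2/4)$---which follows from the same Chernoff estimate used in Lemma 2 combined with the algebraic identity $\tau^4+4\tau^3-6\tau^2-4\tau+5 = (\tau-1)^2(\tau+1)(\tau+5)\ge 0$ for $\tau\ge 1$---and a union bound over the $\bar r$ nonzero eigenvalues then produce the first claimed term.

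For the lower deviation the pigeonhole strategy collapses because the individual $\chi_1^2$ has only a polynomial left tail, so I would instead apply Chernoff directly to the full sum. Using the elementary inequality $s\lambda-\tfrac12\log(1+2s\lambda)\le s^2\lambda^2$ (valid when $1+2s\lambda>0$) one obtains $\log\mathbb E\exp\bigl(-s\sum_i\lambda_i(Y_i^2-1)\bigr)\le s^2\sum_i\lambda_i^2$; optimizing $s$ and invoking $\sum_i\lambda_i^2 \le \|A_{uv}\|_F^2\,\|\tilde Q^*\|_\text{op}^2 \le 2dq^2$, absorbed with the $\bar r$ prefactor from union-bounding positive and negative coefficient sub-sums, then yields the second term. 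Combining the two sides via union bound gives the stated inequality. The main obstacle, I anticipate, is the constant bookkeeping: getting both $\bar r$ and $d$ into the denominator as $8\bar r d q^2$ demands a careful split $S=S_+-S_-$ into its PSD pieces and a tight Chernoff analysis that respects the MGF's range of validity uniformly across both sub-sums, as well as matching the $\max_i|\lambda_i|$ estimate to the $\tau$ actually appearing in the statement.
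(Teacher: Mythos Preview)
Your spectral reduction to $\sum_{i=1}^{\bar r}\lambda_i(Y_i^2-1)$ matches the paper, but from there the argument has a real gap and the paper takes a different route.

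The eigenvalue bound $\max_i|\lambda_i|\le q\sqrt{d/(2\bar r)}$ that drives your upper-tail pigeonhole is not derivable from the ingredients you cite, and in fact can fail. From $\|\tilde Q^*\|_{\mathrm{op}}\le q$ and $\|A_{uv}\|_{\mathrm{op}}=1$ you only get $\max_i|\lambda_i|=\|U^\top A_{uv}U\|_{\mathrm{op}}\le q$; from the Frobenius bound you get $\max_i|\lambda_i|\le\|D\|_F\le q\sqrt{2d}$. Neither yields the factor $\sqrt{1/\bar r}$ you need, and when $\bar r$ is large (it can reach $2d$) your claimed bound is strictly stronger than what holds. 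Without it, the pigeonhole on the linear sum gives a threshold for $Y_i^2$ that does not match the stated $\tau$. There is also a sign issue you gloss over: when $\lambda_i<0$, the event $\lambda_i(Y_i^2-1)\ge\epsilon/\bar r$ forces a \emph{lower} tail $Y_i^2\le 1-\epsilon/(\bar r|\lambda_i|)$, not $Y_i^2\ge\tau^2$, so ``forces $Y_i^2\ge\tau^2$'' is incorrect as written. Finally, your Chernoff treatment of the lower deviation yields a bound of the form $\exp(-\epsilon^2/(8dq^2))$ with no $\bar r$ anywhere; the hand-wave about ``absorbing the $\bar r$ prefactor from union-bounding positive and negative sub-sums'' produces a factor of $2$, not $\bar r$, and does not insert $\bar r$ into the denominator.

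The paper sidesteps all of this with a single Cauchy--Schwarz step that handles the two-sided deviation symmetrically:
\[
\Bigl|\sum_{i=1}^{\bar r}\lambda_i(u_i^2-1)\Bigr|\;\le\;\|D\|_F\,\sqrt{\sum_{i=1}^{\bar r}(u_i^2-1)^2},
\qquad \|D\|_F=\|U^\top A_{uv}U\|_F\le\|A_{uv}\|_F\,\|U\|_F^2\le q\sqrt{2d}.
\]
This reduces the event to $\sum_{i\le\bar r}(u_i^2-1)^2\ge\epsilon^2/(2dq^2)$, and now pigeonhole is applied to the \emph{squared} deviations, giving $|u_j^2-1|\ge\epsilon/(q\sqrt{2\bar r d})$ for some $j$; this is precisely where the $\sqrt{\bar r}$ and the specific $\tau$ in the statement come from. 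The single-variable event then splits into $u_j^2\ge\tau^2$ and $u_j^2\le 1-\epsilon/(q\sqrt{2\bar r d})$, and the two Laurent--Massart chi-square tails produce exactly the two exponential terms, each picking up the $\bar r$ prefactor from the union bound over $j$. The signs of the $\lambda_i$ never enter, so no separate upper/lower analysis of the sum is needed.
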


\begin{proof}

Let $r = rank(\tilde{Q}^{*})$. Since $\tilde{Q}^{*} \in \mathbb{S}_{+}^{qd}$, we can write $\tilde{Q}^{*} = UU^{\top}$ for some $U \in \mathbb{R}^{qd \times r}$. And since $A_{uv}$ is symmetric, we can write the eigen-decomposition of matrix $U^{\top}A_{uv}U = LDL^{\top}$, where $L = \left[ l_1, l_2, \dots, l_r \right] \in \mathbb{R}^{r \times r}$ is an orthogonal matrix and $D = diag \{\lambda_1, \lambda_2, \dots, \lambda_r \}$ with $\lambda_1 \ge \lambda_2 \ge \cdots \ge \lambda_r \ge 0$. Note that $\lambda_i = 0$ for all $i > \bar{r}$ due to the fact that $U^{\top}HU$ has rank at most $\bar{r}$. Denoting $\bar{\xi} \sim \mathcal{N}(0, I_r)$, then we can easily check that $U\bar{\xi} \sim \mathcal{N}(0, Z)$, \ie, $U\bar{\xi}$ is statistically identical to $\xi$.

Therefore, we have that,
\begin{align}
Prob \left( |\xi^{\top} A_{uv} \xi - \mathbb{E}[\xi^{\top} A_{uv} \xi]| \ge \epsilon \right)
= Prob \left( |\sum_{i=1}^{\bar{r}} \lambda_i {(l_i^{\top}\bar{\xi})^2} - \sum_{i=1}^{\bar{r}} \mathbb{E}[ \lambda_i {(l_i^{\top}\bar{\xi})^2} ] | \ge \epsilon \right) \nonumber
\end{align}
Denoting $u_i = l_i^{\top}\bar{\xi}$, we have that $\mathbb{E}[u_i] = 0$ and $\mathbb{E}[u_i^2] = 1$, \ie, $u_i$ is a standard normal variable. Hence,
\begin{align}
Prob \left( |\xi^{\top} A_{uv} \xi - \mathbb{E}[\xi^{\top} A_{uv} \xi]| \ge \epsilon \right) = & Prob \left( |\sum_{i=1}^{\bar{r}} \lambda_i (u_i^2 - 1)| \ge \epsilon \right) \nonumber \\
\le &~ Prob \left( \sqrt{\sum_{i=1}^{\bar{r}} \lambda_i^2} \sqrt{\sum_{i=1}^{\bar{r}}(u_i^2 - 1)^2} \ge \epsilon \right) \nonumber \\
= & Prob \left( ||D||_{F} \sqrt{\sum_{i=1}^{\bar{r}}(u_i^2 - 1)^2} \ge \epsilon \right) \nonumber
\end{align}
Note that,
\begin{align}
||D||_{F} & = ||U^{\top}A_{uv}U||_{F} \le ||A_{uv}||_{F}||U||_{F}^{2} = ||A_{uv}||_{F}\mathbf{Tr}(\tilde{Q}^{*}) \le q\sqrt{2d}. \nonumber
\end{align}
Here the first equality uses the fact that Frobenius norm is rotation-invariant. The second inequality uses the Cauchy-Schwarz inequality. While for the last equality, we can see that if $u = v$, then $||A_{uv}||_{F} = \sqrt{d}$, otherwise $||A_{uv}||_{F} = \sqrt{2d}$. Moreover, we have $\mathbf{Tr}(\tilde{Q}^{*}) = q$ due to the fact $\tilde{Q}^{*}$ is the optimal solution of problem \eqref{eq:BDML_pseudometric_relax} satisfying the orthogonal constraints. Therefore, we have that,
\begin{align}
Prob \left( |\xi^{\top} A_{uv} \xi - \mathbb{E}[\xi^{\top} A_{uv} \xi]| \ge \epsilon \right) \le & Prob \left( \sum_{i=1}^{\bar{r}}({u_i^2} - 1)^2 \ge \frac{\epsilon^2}{2dq^2} \right) \nonumber \\
= & 1 - Prob \left( \sum_{i=1}^{\bar{r}}({u_i^2} - 1)^2 < \frac{\epsilon^2}{2dq^2} \right) \nonumber \\
\le & 1 - Prob \left( ({u_1^2} - 1)^2 < \frac{\epsilon^2}{2\bar{r}dq^2} ~ \& ~ \dots ~ \& ~ ({u_{\bar{r}}^2} - 1)^2 < \frac{\epsilon^2}{2\bar{r}dq^2} \right) \nonumber \\
= & Prob \left( ({u_1^2} - 1)^2 \ge \frac{\epsilon^2}{2\bar{r}dq^2} ~ || ~ \dots ~ || ~ ({u_{\bar{r}}^2} - 1)^2 \ge \frac{\epsilon^2}{2\bar{r}dq^2} \right) \nonumber \\
\le & \sum_{i=1}^{\bar{r}} Prob \left( ({u_i^2} - 1)^2 \ge \frac{\epsilon^2}{2\bar{r}dq^2} \right) \nonumber \\
= & \sum_{i=1}^{\bar{r}} Prob \left( |{u_i^2} - 1| \ge \frac{\epsilon}{q\sqrt{2\bar{r}d}} \right) \nonumber \\
= & \sum_{i=1}^{\bar{r}} \left[ Prob \left( {u_i^2} \ge 1 + \frac{\epsilon}{q\sqrt{2\bar{r}d}} \right) +Prob \left( {u_i^2} \le 1 - \frac{\epsilon}{q\sqrt{2\bar{r}d}} \right) \right] \nonumber \\
\le & \bar{r} \left[ \exp{\left(-\frac{(\tau - 1)^2}{4}\right)} + \exp{\left(-\frac{\epsilon^2}{8\bar{r}dq^2}\right)} \right]. \nonumber
\end{align}
where $\tau = \left(\frac{\epsilon}{q}\sqrt{\frac{2}{\bar{r}d}} + 1\right)^{\frac{1}{2}}$. In the last step, we use the exponential tail bound of Chi-square variable in Lemma \ref{chi_square_tail_bound_3}.

\end{proof}

\begin{remark}
Note that if $u = v$ then $rank(A_{uv}) = d$ otherwise $rank(A_{uv}) = 2d$. Thus, we have that $\bar{r} \le 2d$, which could be used to eliminate the variable $\bar{r}$ in the tail bound.
\end{remark}


\textbf{Proof of Theorem \ref{thm:tailbound}}.

\begin{proof}

First, We have the following,
{\small
\begin{align}
& Prob \left( \nu \ge \gamma \mu ~~  \& ~~ \zeta \le \epsilon ~~  \& ~~ \xi^{\top} \tilde{G} \xi \le \omega \mathbb{E}[\xi^{\top} \tilde{G} \xi] \right) \nonumber \\
& \ge 1 - Prob \left( \exists (i,j) ~~ \xi^{\top} \tilde{X}_{ij} \xi < \gamma \mu \right) - Prob \left( \exists {(u,v)} ~~ |\xi^{\top} A_{uv} \xi - b_{uv}| > \epsilon \right) - Prob \left( \xi^{\top} \tilde{G} \xi > \omega \mathbb{E}[\xi^{\top} \tilde{G} \xi] \right) \nonumber \\
& \ge 1 - \sum_{(i,j) \in \mathcal{I}} Prob \left( \xi^{\top} \tilde{X}_{ij} \xi < \gamma \mu \right) - \sum_{(u,v) \in \mathcal{C}} Prob \left( |\xi^{\top} A_{uv} \xi - b_{uv}| > \epsilon \right) - Prob \left( \xi^{\top} \tilde{G} \xi > \omega \mathbb{E}[\xi^{\top} \tilde{G} \xi] \right) \nonumber \\
& = 1 - |\mathcal{I}| + \sum_{(i,j) \in \mathcal{I}} Prob \left( \xi^{\top} \tilde{X}_{ij} \xi \ge \gamma \mu \right) - \sum_{(u,v) \in \mathcal{C}} Prob \left( |\xi^{\top} A_{uv} \xi - b_{uv}| > \epsilon \right) - Prob \left( \xi^{\top} \tilde{G} \xi > \omega \mathbb{E}[\xi^{\top} \tilde{G} \xi] \right). \nonumber
\end{align}}
Here we use the fact that ,
\begin{align}
& \mathbb{E}[\xi^{\top} \tilde{X}_{ij} \xi] = \tilde{X}_{ij} \bullet \tilde{Q}^{*} \ge \mu \nonumber \\
& \mathbb{E}[\xi^{\top} A_{uv} \xi] = A_{uv} \bullet \tilde{Q}^{*} = b_{uv}. \nonumber
\end{align}
We denote
\begin{align}
\mathbb{T}_1 & = \sum_{(i,j) \in \mathcal{I}} Prob \left( \xi^{\top} \tilde{X}_{ij} \xi \ge \gamma \mu \right), \nonumber \\
\mathbb{T}_2 & = \sum_{(u,v) \in \mathcal{C}} Prob \left( |\xi^{\top} A_{uv} \xi - b_{uv}| > \epsilon \right) \nonumber \\
\mathbb{T}_3 & = Prob \left( \xi^{\top} \tilde{G} \xi > \omega \mathbb{E}[\xi^{\top} \tilde{G} \xi] \right). \nonumber
\end{align}
and
\begin{align}
r_1 & = \min\{\max_{(i,j)} ~~ rank(\tilde{X}_{ij}), rank(\tilde{Q}^{*})\}, \nonumber \\
r_2 & = \min\{\max_{(u,v)} ~~ rank(A_{uv}), rank(\tilde{Q}^{*})\},  \nonumber \\
r_3 & = \min\{rank(\tilde{G}), rank(\tilde{Q}^{*})\}. \nonumber
\end{align}
According to the first constraints in \eqref{eq:BDML_pseudometric_relax} and the lemma of left-side polynomial tail bound, we have,
\begin{align}
\mathbb{T}_1 & \ge  \sum_{(i,j) \in \mathcal{I}} Prob \left( \xi^{\top} \tilde{X}_{ij} \xi \ge \gamma \mathbb{E}[\xi^{\top} \tilde{X}_{ij} \xi] \right)  \nonumber \\
& \ge |\mathcal{I}| \left(1 - \max \{ \sqrt{\gamma}, \frac{2(r_1-1)\gamma}{\pi - 2} \} \right). \nonumber
\end{align}
Then according the second constraints in \eqref{eq:BDML_pseudometric_relax} and the lemma of two-side exponential tail, we have,
\begin{align}
\mathbb{T}_2 &=  \sum_{(u,v) \in \mathcal{C}} Prob \left( |\xi^{\top} A_{uv} \xi - \mathbb{E}[\xi^{\top} A_{uv} \xi ]| > \epsilon \right) \nonumber \\
&\le \frac{{r_2}q(q+1)}{2} \left[ \exp{\left(-\frac{(\tau - 1)^2}{4}\right)} + \exp{\left(-\frac{\epsilon^2}{8{r_2}dq^2}\right)} \right], \nonumber
\end{align}
where $\tau$ = $\left(\frac{\epsilon}{q}\sqrt{\frac{2}{{r_2}d}} + 1\right)^{\frac{1}{2}}$. At last, according to the lemma of right-side exponential tail bound, we have,
\begin{align}
\mathbb{T}_3 \le {r_3} \exp{\left(-\frac{1}{2}\left(\omega-\sqrt{2\omega - 1}\right)\right)}. \nonumber
\end{align}
Therefore, based on all above inequalities and facts that $r_1 \le r$, $r_2 \le r$ and $r_3 \le r$, we can derive that,
\begin{align}
& Prob \left(\nu \ge \gamma \mu ~~~ \& ~~~ \zeta \le \epsilon ~~~ \& ~~~ \xi^{\top} \tilde{G} \xi \le \omega \tilde{G} \bullet \tilde{Q}^{*} \right) \ge 1 - |\mathcal{I}| \max \left( \sqrt{\gamma}, \frac{2(r-1)\gamma}{\pi - 2} \right) \nonumber \\
& - r \exp{\left(-\frac{1}{2}\left(\omega-\sqrt{2\omega - 1}\right)\right)} - \frac{rq(q+1)}{2} \left[ \exp{\left(-\frac{(\tau - 1)^2}{4}\right)} + \exp{\left(-\frac{\epsilon^2}{8rdq^2}\right)} \right].  \nonumber
\end{align}

\end{proof}


\subsection{Proofs in Section \ref{sect:general_bounds}}

\textbf{Proof of Lemma \ref{lem:stability_BDML}}.

\begin{proof}

The given loss function is $\ell(\mathcal{A}, X_{ij}) = M \bullet X_{ij}$. First, note that,
\begin{align}
M \bullet X_{ij} & = (x_i - x_j)^{\top} M (x_i - x_j), \nonumber \\
& = \frac{ (x_i - x_j)^{\top} M (x_i - x_j) } {(x_i - x_j)^{\top}(x_i - x_j)} (x_i - x_j)^{\top}(x_i - x_j), \nonumber \nonumber
\end{align}
Then, relying on the property of the Rayleigh quotient, we can obtain that,
\begin{align}
\lambda_{\min} || x_i - x_j ||_{2}^{2} \le M \bullet X_{ij} \le \lambda_{\max} || x_i - x_j ||_{2}^{2}, \nonumber
\end{align}
where $\lambda_{\max}$ and $\lambda_{\min}$ are the maximum and minimum eigenvalues of $M$.

Thus, denoting the replace-one dataset and the corresponding metric as $D^{k}$ and $M^{k}$ respectively, we can derive that,
\begin{align}
\left| \ell(\mathcal{A}_D, X_{ij}) - \ell(\mathcal{A}_{D^{k}}, X_{ij}) \right| & = \left| M \bullet X_{ij} - M^{k} \bullet X_{ij} \right|, \nonumber \\
& \le \left| \lambda_{\max} - \lambda^{k}_{\min} \right| || x_i - x_j ||_{2}^{2} , \nonumber \\
& \le 4\Gamma^2 \left( \lambda_{\max} + \lambda^{k}_{\min} \right) , \nonumber \\
& \le 4\Gamma^2 (\frac{KR}{d} + \frac{R}{d}), \nonumber \\
& \le \frac{4(K+1)R\Gamma^2}{d}, \nonumber
\end{align}
where $\lambda_{\max}$ and $\lambda^{k}_{\min}$ are the maximum and minimum eigenvalues of $M$ and $M^{k}$ respectively. The second inequality uses the fact that $x_i$ and $x_j$ are in a $\Gamma$-ball. And the next one relies on that, for both $M$ and $M^{k}$, it is true that
\begin{align}
\lambda_{\max} \le K \lambda_{\min} \le \frac{K}{d}\mathbf{Tr}(M) \le \frac{KR}{d}. \nonumber
\end{align}
Therefore, the Uniform-Replace-One stability $\beta = \frac{4(K+1)R\Gamma^2}{d}$.

\end{proof}

\begin{remark}

Note if our previous assumption is violated, \ie, $M$ is rank-deficient, then this stability result does not stand any more. In particular, we will have, for any $M$,
\begin{align}
\lambda_{\max} \le K \lambda_{\min} \le \frac{K}{r}\mathbf{Tr}(M) \le \frac{KR}{r}, \nonumber
\end{align}
where $r$ is the rank of $M$ and $r < d$. Then, the stability could be rewritten as $\beta = \frac{4(K+1)R\Gamma^2}{\bar{r}}$, where $\bar{r} = \min \{ rank(M), rank(M^{k}) \}$. This resultant stability is case-dependent, thus being less favourable. This argument is important for our later explanation why our generalization bound will not become tighter and tighter via trivially increasing the feature dimension.

\end{remark}


To prove the Theorem \ref{thm:polybound}, we first state one part of \textbf{Lemma 9} in \cite{bousquet2002stability} as below.

\begin{lemma}[\textbf{Variance Bound}]
For any algorithm $\mathcal{A}$ and loss function $\ell$ such that $0 \le \ell \le L$, we have for any different $i, j \in \{1,\dots,n\}$,
\begin{align}
\mathbb{E}_{D}\left[ \left( \mathcal{R}(\mathcal{A}, D) - \mathcal{R}_{emp} (\mathcal{A}, D) \right)^2 \right] \le \frac{L^2}{2n} + 3L \mathbb{E}_{\{D \bigcup z_i^{\prime}\}}\left[ |\ell(A_D,z_i) - \ell(A_{D^i},z_i)| \right]. \nonumber
\end{align}
\end{lemma}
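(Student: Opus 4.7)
The plan is to follow the classical stability-based variance argument of Bousquet and Elisseeff. First, I would expand the square and use the exchangeability of the training samples $z_1,\ldots,z_n$ together with a fresh independent copy $z$ to bring the expression into a canonical form. Writing $\mathcal{R}=\mathbb{E}_{z}[\ell(\mathcal{A}_D,z)]$ and $\mathcal{R}_{emp}=\tfrac{1}{n}\sum_k \ell(\mathcal{A}_D,z_k)$, and using symmetry across the training indices, one obtains
\begin{align*}
\mathbb{E}_D[(\mathcal{R}-\mathcal{R}_{emp})^2]=\mathbb{E}[\ell(\mathcal{A}_D,z)\ell(\mathcal{A}_D,z')]-2\,\mathbb{E}[\ell(\mathcal{A}_D,z)\ell(\mathcal{A}_D,z_i)]+\tfrac{1}{n}\mathbb{E}[\ell(\mathcal{A}_D,z_i)^2]+\tfrac{n-1}{n}\mathbb{E}[\ell(\mathcal{A}_D,z_i)\ell(\mathcal{A}_D,z_j)]
\end{align*}
for any distinct $i,j$, where $z,z'$ are drawn independently of $D$ from $\mathcal{D}$.

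Second, I would split this as $T_1+T_2$, where $T_2=\tfrac{1}{n}\bigl(\mathbb{E}[\ell(\mathcal{A}_D,z_i)^2]-\mathbb{E}[\ell(\mathcal{A}_D,z_i)\ell(\mathcal{A}_D,z_j)]\bigr)$ collects the diagonal contribution. Exchangeability of $z_i$ and $z_j$ (both i.i.d. elements of $D$) gives the symmetrization identity
\begin{align*}
\mathbb{E}[\ell(\mathcal{A}_D,z_i)^2]-\mathbb{E}[\ell(\mathcal{A}_D,z_i)\ell(\mathcal{A}_D,z_j)]=\tfrac{1}{2}\mathbb{E}[(\ell(\mathcal{A}_D,z_i)-\ell(\mathcal{A}_D,z_j))^2]\le L^2/2,
\end{align*}
so $T_2\le L^2/(2n)$, producing exactly the first term of the bound.

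Third, the remaining quantity $T_1=\mathbb{E}[\ell(\mathcal{A}_D,z)\ell(\mathcal{A}_D,z')]-2\mathbb{E}[\ell(\mathcal{A}_D,z)\ell(\mathcal{A}_D,z_i)]+\mathbb{E}[\ell(\mathcal{A}_D,z_i)\ell(\mathcal{A}_D,z_j)]$ must be bounded by $3L\beta$, where $\beta:=\mathbb{E}[|\ell(\mathcal{A}_D,z_i)-\ell(\mathcal{A}_{D^i},z_i)|]$. The key tool is the renaming identity: since $(z_1,\ldots,z_n,z)$ are i.i.d., swapping $z_i$ with $z$ yields, for any bounded $\phi$, the equality $\mathbb{E}[\phi(\mathcal{A}_D,z_i,z)]=\mathbb{E}[\phi(\mathcal{A}_{D^i},z_i',z_i)]$, where on the right $z_i'$ is the new $i$-th element of $D^i$ and $z_i$ is external. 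Applying this to $\phi(\mathcal{A},a,b)=\ell(\mathcal{A},a)\ell(\mathcal{A},b)$ and variants rewrites all three terms of $T_1$ in a common form featuring $\mathcal{A}_{D^i}$. Passing each factor $\ell(\mathcal{A}_{D^i},\cdot)$ back to $\ell(\mathcal{A}_D,\cdot)$ incurs a stability difference that, combined with $\ell\le L$, contributes at most $L\beta$; careful bookkeeping reveals exactly three such mismatches, giving $T_1\le 3L\beta$.

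The main obstacle is step three: some swaps produce the desired ``stability at the replaced sample'' $|\ell(\mathcal{A}_D,z_i)-\ell(\mathcal{A}_{D^i},z_i)|$ directly, while others naturally yield ``stability at a fresh sample'' $|\ell(\mathcal{A}_D,z)-\ell(\mathcal{A}_{D^i},z)|$, which is not the quantity appearing in $\beta$. These must be reconciled by applying a second $z\leftrightarrow z_i$ swap to the fresh-sample version so that under the joint expectation it coincides (up to relabeling) with the replaced-sample version; only then can every cross term be absorbed into the single $\beta$ used in the statement. Combining $T_1+T_2\le 3L\beta+L^2/(2n)$ then yields the lemma.
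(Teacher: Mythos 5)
The paper does not prove this lemma at all: it is quoted verbatim as ``one part of Lemma 9'' of Bousquet and Elisseeff [2002], so there is no in-paper argument to compare against. Judged on its own merits, your first two steps are correct and standard: the second-moment expansion (valid for a symmetric algorithm) and the symmetrization identity $\mathbb{E}[\ell(\mathcal{A}_D,z_i)^2]-\mathbb{E}[\ell(\mathcal{A}_D,z_i)\ell(\mathcal{A}_D,z_j)]=\tfrac12\mathbb{E}[(\ell(\mathcal{A}_D,z_i)-\ell(\mathcal{A}_D,z_j))^2]\le L^2/2$ give the $L^2/(2n)$ term exactly as you say.

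The gap is in step three, and it sits precisely at the crux of the lemma. Your proposed reconciliation of the ``fresh-sample'' stability with the ``replaced-sample'' stability does not work: applying the swap $z\leftrightarrow z_i$ to $\mathbb{E}[|\ell(\mathcal{A}_D,z)-\ell(\mathcal{A}_{D^i},z)|]$ yields $\mathbb{E}[|\ell(\mathcal{A}_{D^{i\to z}},z_i)-\ell(\mathcal{A}_{D^i},z_i)|]$, a comparison between two training sets \emph{neither} of which contains the evaluation point $z_i$. No relabeling identifies this with $\beta=\mathbb{E}[|\ell(\mathcal{A}_D,z_i)-\ell(\mathcal{A}_{D^i},z_i)|]$, where one of the two sets does contain $z_i$; relating the former to the latter requires a triangle inequality through $\mathcal{A}_D$ and costs a factor of $2$, which would destroy the constant $3$. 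Moreover, if you actually carry out the telescoping of $T_1$ (for instance by rewriting all three cross terms as evaluations at the common pair $(z_i,z_j)$ under the training sets $D$, $D^{j}$ and $D^{i,j}$), you find four stability differences: two are genuinely of the form $\beta$, one is a fresh-point difference, and one is a ``retained training point'' difference $\mathbb{E}[|\ell(\mathcal{A}_D,z_j)-\ell(\mathcal{A}_{D^j\,\text{or}\,D^i},z_j)|]$ evaluated at a point that stays in both sets --- a quantity your sketch never mentions and which also is not $\beta$. So ``exactly three mismatches, each worth $L\beta$'' is asserted rather than established, and the one mechanism you offer for forcing it is invalid. Note that for the paper's downstream use the issue is harmless, because its Uniform-Replace-One stability bounds $\|\ell(\mathcal{A}_D,\cdot)-\ell(\mathcal{A}_{D^i},\cdot)\|_\infty$, so every flavor of stability difference above is at most $\beta$ and Theorem 3 survives; but as a proof of the lemma as literally stated, with the expectation taken only at the replaced point $z_i$, your step three has a genuine hole.
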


\begin{remark}
Here $D$ and $D^{i}$ are defined in Sect. \ref{sect:general_bounds} and $D^{i} = \{D \backslash z_i \cup z_i^{\prime} \}$.
\end{remark}


\textbf{Proof of Theorem \ref{thm:polybound}}.

\begin{proof}

First, for the given loss function $\ell$, we have that,
\begin{align}
\ell(\mathcal{A}, X_{ij}) & = M \bullet X_{ij} \nonumber \\
& \le \lambda_{\max} || x_i - x_j ||_2^2 \nonumber \\
& \le \frac{4KR\Gamma^2}{d}. \nonumber
\end{align}
Then due to the definition of uniform-RO stability, we have that
\begin{align}
\mathbb{E}_{\{D \bigcup z_i^{\prime}\}}\left[ |\ell(A_D,z_i) - \ell(A_{D^i},z_i)| \right] \le \beta. \nonumber
\end{align}

Therefore, according to the above lemma of variance bound, we can obtain that,
\begin{align}
& \mathbb{E}_{D}\left[ \left( \mathcal{R}(\mathcal{A}, D) - \mathcal{R}_{emp} (\mathcal{A}, D) \right)^2 \right] \nonumber \\
& \le \frac{8K^2R^2\Gamma^4}{nd^2} + \frac{12KR\Gamma^2\beta}{d}. \nonumber
\end{align}

Then based on Chebyshev's inequality, we can derive that,
\begin{align}
& Prob \left( \mathcal{R}(\mathcal{A}, D) - \mathcal{R}_{emp}(\mathcal{A}, D) \ge \epsilon \right) \nonumber \\
& \le \frac{\mathbb{E}_{D}\left[ \left( \mathcal{R}(\mathcal{A}, D) - \mathcal{R}_{emp} (\mathcal{A}, D) \right)^2 \right]}{\epsilon^2}. \nonumber \\
& \le \frac{1}{\epsilon^2}\left( \frac{8K^2R^2\Gamma^4}{nd^2} + \frac{12KR\Gamma^2\beta}{d} \right). \nonumber
\end{align}

Setting the right hand side of the above inequality as $\delta$, we thus have with probability at least $1 - \delta$ that,
\begin{align}
\mathcal{R}(\mathcal{A}, D) \le \mathcal{R}_{emp} (\mathcal{A}, D) + 2\Gamma\sqrt{\frac{KR}{d\delta} \left(\frac{2KR\Gamma^2}{nd} + 3\beta\right)}. \nonumber
\end{align}

By substituting the Uniform-Replace-One stability $\beta$, we can obtain the following specific bound,
\begin{align}
\mathcal{R}(\mathcal{A}, D) \le \mathcal{R}_{emp} (\mathcal{A}, D) + \frac{2R\Gamma^2}{d} \sqrt{\frac{2K}{\delta}\left( \frac{K}{n} + 6K + 6 \right)}. \nonumber
\end{align}

\end{proof}

\begin{remark}

As aforementioned, there is one counter-intuitive property of our generalization bound that it becomes tighter when the feature dimension $d$ increases. However, it is the case only if our previous full-rank assumption on $M$ holds. If this assumption is violated, \eg, in the sparse high dimensional feature space, the above result does not stand any more. Therefore, it rules out the possibility that trivially increasing the feature dimension by adding zeros will improve the generalization ability.

\end{remark}


\section{Impact of Parameters}

In this section, we study how the performance of our BDML algorithm varies with several important parameters, including distortion bound $K$, trace bound $R$ and width $\rho$. Except running time which needs large scale data, we experiment with all other parameters on the UCI Iris dataset. As in Sect. \ref{sect:classification}, we randomly split the dataset into 70\% for training and 30\% for testing and report the average test error and its standard deviation by repeating the random splits for $10$ times.

\subsection{Distortion Bound $K$}

We first study the effects of distortion bound $K$. We fix the number of iteration $T = 1000$, the margin of \textit{p}-BDML $\mu = 1$, the trace bound $R = 100$ and the width $\rho = 500$. And we report the mean condition number, mean test error and the standard deviation of test error. The results are listed in Table \ref{tbl:boundedDistortion}. From the table, we can find that with $K$ increases, the resultant mean condition number becomes larger. It is partly because that as $K$ increases, the bounded-distortion constraint becomes easier to satisfy, thus encouraging MWU method puts more weights on other constraints like the margin ones. Hence the learned metric embedding is more distorted to fitting the training data. Moreover, with $K$ increases, the test error first decreases and then increases which matches our analysis in Sect. \ref{sect:general_bounds}.

\begin{table}\label{tbl:boundedDistortion}
\centering
\begin{tabular}{c|c|c|c}
\hline
    $K$         & Mean Cond. Num.   & Mean Err.     & Std Err.($\pm$)   \\ \hline \hline
    1.0e+1      & 1.454             & 4.222         & 2.446             \\ \hline
    1.0e+2      & 2.165             & 3.333         & 2.160             \\ \hline
    1.0e+3      & 2.158             & 4.222         & 2.147             \\ \hline
    1.0e+4      & 37.567            & 5.333         & 2.859             \\ \hline
    1.0e+5      & 2962.094          & 6.222         & 3.443             \\
\hline
\end{tabular}
\caption{Performance varies with distortion bound $K$.}
\end{table}

\subsection{Trace Bound $R$ and Width $\rho$}

\addtolength{\tabcolsep}{-2pt}
\begin{table}\label{tbl:traceBound_width}
\centering
\footnotesize
\begin{tabular}{c|c|c|c|c}
\hline
    $R$         & $\rho$    & Mean Cond. Num.   & Mean Err.     & Std Err.($\pm$)   \\ \hline \hline
    1.0e+2      & 1.0e+2    & 2.682             & 4.222         & 2.210             \\ \hline
    1.0e+2      & 5.0e+2    & 1.638             & 4.222         & 1.640             \\ \hline
    1.0e+2      & 1.0e+3    & 2.354             & 5.111         & 1.500             \\ \hline
    1.0e+3      & 1.0e+3    & 2.768             & 5.778         & 2.608             \\ \hline
    1.0e+3      & 5.0e+3    & 1.611             & 4.889         & 1.753             \\ \hline
    1.0e+3      & 1.0e+4    & 2.360             & 7.111         & 2.295             \\
\hline
\end{tabular}
\caption{Performance varies with trace bound $R$ and width $\rho$.}
\end{table}

We now study the effects of trace bound $R$ and width $\rho$. These two parameters are correlated in a sense that the width $\rho$ should not be much smaller than the trace bound $R$ since otherwise the constraint of width, \ie, $\forall i$, $\left| {{J_i} \bullet {Y^{(t)}} - {h_i}} \right| \le \rho$ will not stand. We fix the number of iteration $T = 1000$, the margin of \textit{p}-BDML $\mu = 1$ and the distortion bound $K = 1000$. Same measurements are reported in Table \ref{tbl:traceBound_width}.

From this Table, we can see that if width $\rho$ is set to be much larger than the trace bound $R$, the resultant mean test error tends to become larger. This may due to the fact that with the same number of iteration, the larger the width, the smaller the overall quality of the solution of the MWU method deteriorates which is matched to the analysis in \cite{kale2007efficient}. On the other side, if width $\rho$ is nearly equal to the trace bound $R$, the aforementioned constraint of width will be violated sometimes. Therefore, in practice, we find that setting $R \approx 10d$ and $\rho \approx 5R$ yield good results. Here $d$ is the dimension of the input feature.

\subsection{Running Time}

\begin{figure}[t]
\centering
\includegraphics[width=0.70\linewidth]{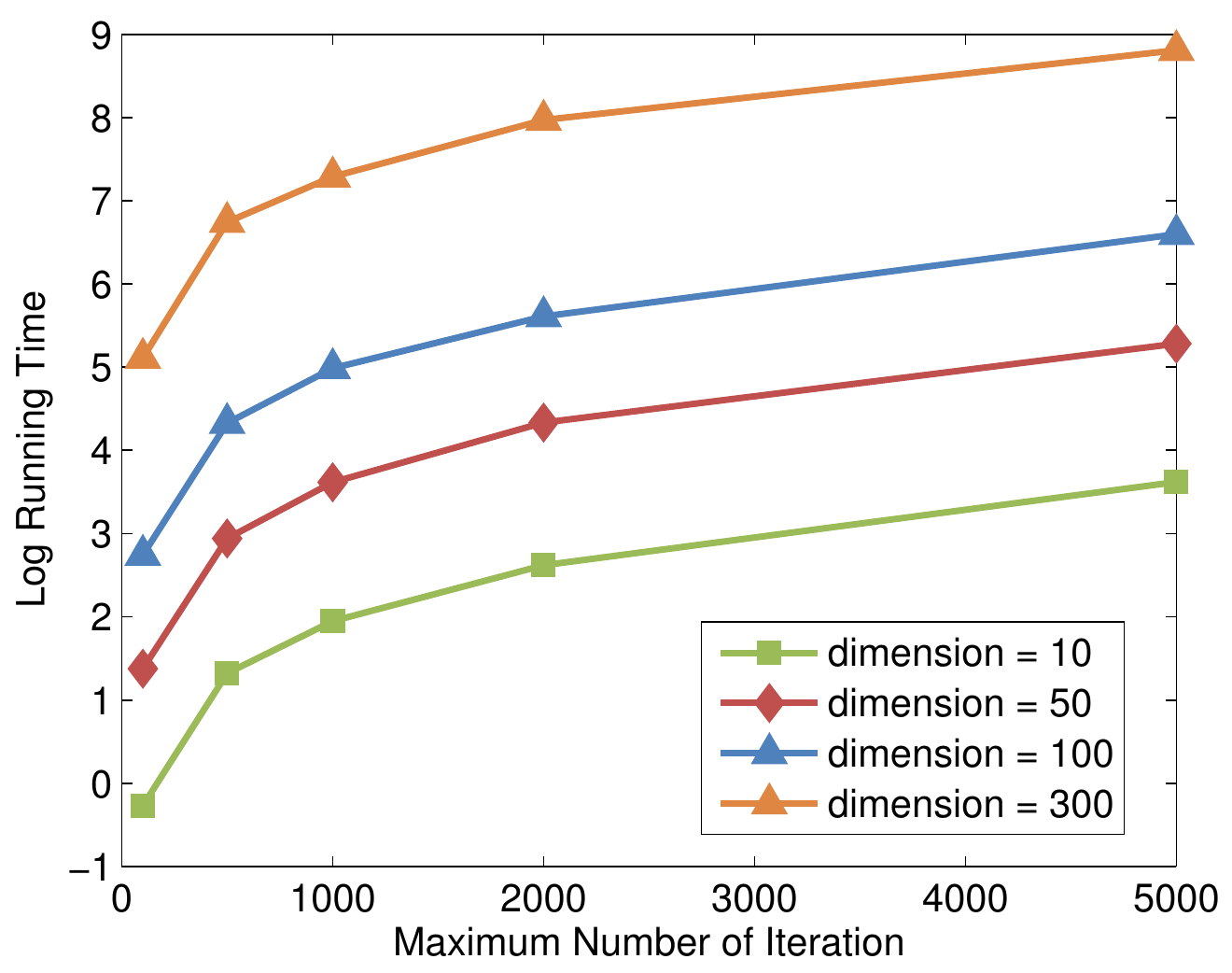}
\caption{Runing Time of MWU solver.}
\label{fig:runTime}
\end{figure}

We now investigate the running time on the LFW dataset due to its high-dimensional feature. Since the main component of our BDML algorithm is the MWU method, we thus study how its running time varies with respect to the maximum number of iteration and dimension of input feature. The trace bound $R$ and width $\rho$ are both fixed as $3d+1$ as aforementioned. We implement the algorithm as a single-thread \textsc{Matlab} program. And all our experiments are conducted on a server with Intel Xeon E5 CPU(2.6GHz) and 128G RAM. In particular, we test following values of dimension $d$ of input feature, 10, 50, 100 and 300. And for each dimension, we set the maximum number of iteration as 100, 500, 1000, 2000 and 5000 and keep track of the corresponding running time. The natural logarithmic of all results are plotted in Fig.~\ref{fig:runTime}. Generally, for 100-dim feature, it takes around 146s to finish 1000 iterations of MWU.

\begin{table*}[t]
\begin{center}
\begin{tabular}{c|c|c|c|c|c|c}
\hline
  Methods           & OrigFeat & SGF & GFK(PCA) & LMNN  & DML-eig   & \textit{t}-BDML      \\ \hline \hline
  A $\rightarrow$ C & 22.6(0.3)	& 35.3(0.5)	& 35.6(0.4)	& 35.7(0.5)	& 35.0(0.7)	& \textbf{37.2}(0.3) \\ \hline
  A $\rightarrow$ D & 22.2(0.4)	& 30.7(0.8)	& \textbf{35.2}(0.9)	& 32.3(1.0)	& 27.6(1.1)	& 33.0(1.0) \\ \hline
  A $\rightarrow$ W & 23.5(0.6)	& 31.0(0.7)	& 34.4(0.9)	& 32.9(0.8)	& 28.9(0.7)	& \textbf{35.2}(1.0) \\ \hline
  C $\rightarrow$ A & 20.8(0.4)	& 36.8(0.5)	& \textbf{36.9}(0.4)	& 33.8(0.7)	& 33.7(0.7)	& 35.2(0.7) \\ \hline
  C $\rightarrow$ D & 22.0(0.6)	& 32.6(0.8)	& \textbf{35.2}(1.0)	& 31.5(1.6)	& 32.7(1.3)	& 33.4(1.3) \\ \hline
  C $\rightarrow$ W & 19.4(0.7)	& 30.6(0.8)	& \textbf{33.7}(1.1)	& 26.0(1.2)	& 29.2(1.3)	& 29.6(0.8) \\ \hline
  D $\rightarrow$ A & 27.7(0.4)	& 32.0(0.4)	& 32.5(0.5)	& 33.7(0.4)	& 33.4(0.3)	& \textbf{37.1}(0.6) \\ \hline
  D $\rightarrow$ C & 24.8(0.4)	& 29.4(0.5)	& 29.8(0.3)	& 29.4(0.5)	& 29.8(0.3)	& \textbf{32.6}(0.6) \\ \hline
  D $\rightarrow$ W & 53.1(0.6)	& 66.0(0.5)	& 74.9(0.6)	& 75.1(0.8)	& 78.2(0.8)	& \textbf{78.6}(0.7) \\ \hline
  W $\rightarrow$ A & 20.7(0.6)	& 27.5(0.5)	& 31.1(0.8)	& 30.8(0.7)	& 32.5(0.9)	& \textbf{33.2}(0.8) \\ \hline
  W $\rightarrow$ C & 16.1(0.4)	& 21.7(0.4)	& 27.2(0.5)	& 26.3(0.7)	& 27.0(0.5)	& \textbf{28.5}(0.6) \\ \hline
  W $\rightarrow$ D & 37.3(1.2)	& 54.3(1.2)	& 70.6(0.9)	& 67.6(1.0)	& 72.4(0.6)	& \textbf{73.8}(0.6) \\
\hline
\end{tabular}
\end{center}
\caption{Comparison of unsupervised domain adaptation. Mean accuracy (\%) and standard error (inside parentheses) are reported. The best performance is denoted in bold type.}
\label{tbl:DomainAdaptationUnsupervisedFull}
\end{table*}

\section{Full Results of Experiments}

We in this section demonstrate the comprehensive results of our experiments.

\subsection{Domain Adaptation}

\begin{table*}[t]
\begin{center}
\begin{tabular}{c|c|c|c|c|c|c|c|c}
\hline
  Methods & OrigFeat & ITML & SGF & GFK(PCA) & LMNN & DML-eig & MMDT  & \textit{t}-BDML      \\ \hline \hline
  A $\rightarrow$ C & 24.0(0.3) & 27.3(0.7)	& 37.7(0.5)	& 37.8(0.4)             & 36.6(0.6)	& 27.8(0.7)             & 36.4(0.8)                 & \textbf{38.8}(0.3) \\ \hline
  A $\rightarrow$ D & 28.1(0.6) & 33.7(0.9)	& 34.5(1.1)	& 47.0(1.2)             & 43.8(1.2) & 33.0(0.9)             & \textbf{56.7}(1.3)        & 46.5(0.9) \\ \hline
  A $\rightarrow$ W & 31.6(0.6) & 36.0(1.0)	& 37.9(0.7)	& 53.7(0.8)             & 49.6(0.9)	& 40.5(1.0)	            & \textbf{64.6}(1.2)        & 55.8(1.1) \\ \hline
  C $\rightarrow$ A & 23.1(0.4) & 33.7(0.8)	& 40.2(0.7)	& 42.0(0.5)             & 43.3(0.5)	& 43.3(0.6)	            & \textbf{49.4}(0.8)        & 44.6(0.6) \\ \hline
  C $\rightarrow$ D & 26.5(0.7) & 35.0(1.1)	& 36.6(0.8)	& 49.5(0.9)             & 50.3(1.3)	& 45.1(1.6)	            & \textbf{56.5}(0.9)        & 54.0(1.1) \\ \hline
  C $\rightarrow$ W & 25.2(0.8) & 34.7(1.0)	& 37.2(0.9)	& 54.2(0.9)             & 56.2(1.5)	& 58.9(1.2)	            & \textbf{63.8}(1.1)        & 56.0(1.0) \\ \hline
  D $\rightarrow$ A & 31.3(0.7) & 30.3(0.8)	& 39.2(0.7)	& 45.0(0.7)             & 42.0(0.7)	& 43.4(0.6)             & \textbf{46.9}(1.0)        & 43.9(0.6) \\ \hline
  D $\rightarrow$ C & 22.4(0.5) & 22.5(0.6)	& 30.2(0.7)	& 32.7(0.4)             & 33.4(0.4)	& 31.9(0.4)	            & 34.1(0.8)                 & \textbf{35.4}(0.3) \\ \hline
  D $\rightarrow$ W & 55.5(0.7) & 55.6(0.7)	& 69.5(0.9)	& 78.7(0.5)             & 78.6(0.7)	& 80.5(0.9)	            & 74.1(0.8)                 & \textbf{83.8}(0.5) \\ \hline
  W $\rightarrow$ A & 30.8(0.6) & 32.3(0.8)	& 38.2(0.6)	& 42.8(0.7)             & 42.3(0.6)	& 40.8(0.7)	            & \textbf{47.7}(0.9)        & 44.8(0.6) \\ \hline
  W $\rightarrow$ C & 20.8(0.5) & 21.7(0.5)	& 29.2(0.7)	& 32.8(0.7)             & 32.2(0.7)	& 32.8(0.6)             & 32.2(0.8)                 & \textbf{33.3}(0.6) \\ \hline
  W $\rightarrow$ D & 44.3(1.0) & 51.3(0.9)	& 60.6(1.0)	& 75.0(0.7)             & 72.8(1.1)	& 76.8(0.9)	            & 67.0(1.1)                 & \textbf{79.2}(0.7) \\
\hline
\end{tabular}
\end{center}
\caption{Comparison of semi-supervised domain adaptation. Mean accuracy (\%) and standard error (inside parentheses) are reported. The best performance is denoted in bold type.}
\label{tbl:DomainAdaptationSemisupervisedFull}
\end{table*}

For domain adaptation, we show the full results of all 12 possible combinations of source and target domains. In particular, unsupervised and semi-supervised experiments are listed in Table~\ref{tbl:DomainAdaptationUnsupervisedFull} and Table~\ref{tbl:DomainAdaptationSemisupervisedFull} respectively. We include the state-of-art results of max-margin domain transformations (MMDT)~\cite{hoffman2013efficient} on this dataset in Table~\ref{tbl:DomainAdaptationSemisupervisedFull}. Note that the comparison with MMDT is somewhat unfair for our method, because it exploits the discrimination power of a max-margin classifier, whereas ours is the simple distance metric learning based 1-NN classifier. However, it is promising that, with such simple classifier, our BDML still achieves state-of-the-art results in some subtasks.

\subsection{Face Verification}

\begin{figure}[t]
\centering
\includegraphics[width=0.70\linewidth]{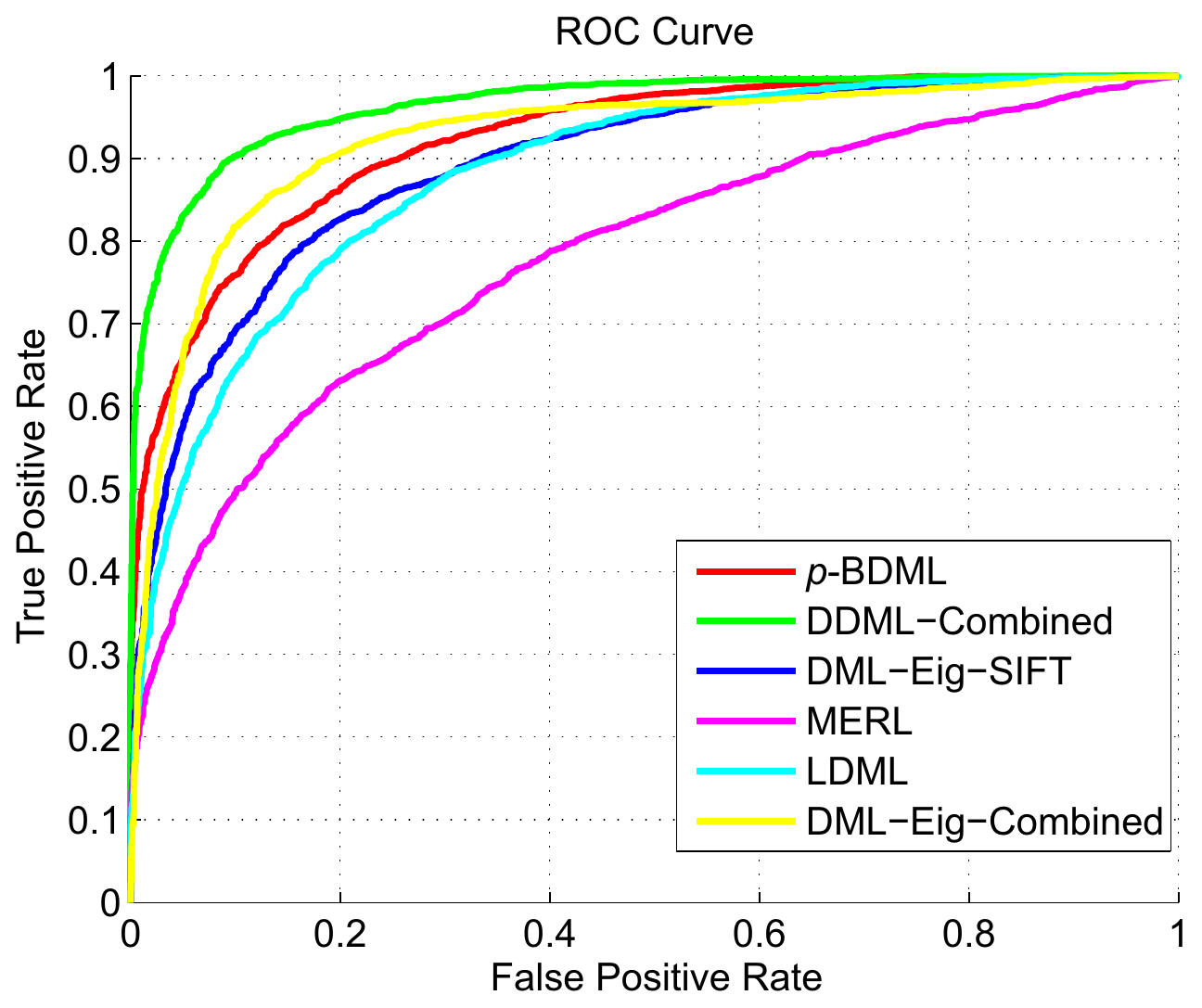}
\caption{ROC curves on LFW dataset.}
\label{fig:ROC}
\end{figure}

In this section, we present full experimental comparisons on LFW dataset. In Table~\ref{tbl:LFW_Full}, we list various published results on ``Image-Restricted, Label-Free Outside Data" setting of LFW dataset. Specifically, the abbreviations of these algorithms are MERL~\cite{huang2008lfw}, Xing~\cite{xing2003distance}, ITML~\cite{davis2007information}, LDML~\cite{guillaumin2009you}, DML-eig-SIFT~\cite{ying2012distance}, Sub-ITML~\cite{cao2013similarity}, KISSME~\cite{kostinger2012large}, Sub-ML~\cite{cao2013similarity}, LBP+CSML~\cite{nguyen2011cosine}, DML-eig-Combined~\cite{ying2012distance}, Convolutional DBN~\cite{huang2012learning}, Sub-SML~\cite{cao2013similarity} and DDML-Combined~\cite{hu2014DDML}. Among them, ``Convolutional DBN" and ``DDML-Combined" are deep learning based methods. Suffix ``Combined" means the method uses multiple descriptors, e.g., SIFT~\cite{lowe2004distinctive}, LBP~\cite{ojala2002multiresolution}, TPLBP~\cite{wolf2008descriptor}, etc. From this table, we can find that, although using only dimension-reduced SIFT feature, our BDML algorithm achieves comparable results with other feature-combined and non-metric learning based ones. Moreover, our method achieves the least stand errors compared to others which indicates that our BDML produces stable metrics. The ROC curves versus others are plotted in Fig.~\ref{fig:ROC}.

\begin{table}[t]
\begin{center}
\begin{tabular}{c|c|c}
\hline
  Methods               & Mean Acc.             & Std Err.($\pm$)   \\ \hline \hline
  MERL                  & 0.7052                & 0.0060            \\ \hline
  Xing                  & 0.7593                & 0.0059            \\ \hline
  ITML                  & 0.7812                & 0.0045            \\ \hline
  LDML                  & 0.7927                & 0.0060            \\ \hline
  DML-eig-SIFT          & 0.8127                & 0.0230            \\ \hline
  Sub-ITML              & 0.8145                & 0.0046            \\ \hline
  KISSME                & 0.8308                & 0.0056            \\ \hline
  Sub-ML                & 0.8330                & 0.0026            \\ \hline
  LBP+CSML              & 0.8557                & 0.0052            \\ \hline
  DML-eig-Combined      & 0.8565                & 0.0056            \\ \hline
  \emph{p}-BDML         & 0.8632                & 0.0022            \\ \hline
  Convolutional DBN     & 0.8777                & 0.0062            \\ \hline
  Sub-SML               & 0.8973                & 0.0038            \\ \hline
  DDML-Combined         & \textbf{0.9068}       & 0.0141            \\ \hline
\end{tabular}
\end{center}
\caption{Full comparison on ``Image-Restricted, Label-Free Outside Data" setting of LFW dataset.}
\label{tbl:LFW_Full}\vspace{-.2cm}
\end{table}

\section{Useful Tail Bound for Chi-square Variables}

We list the following sharp tail bound for chi-square variables.



\begin{lemma}\label{chi_square_tail_bound_3}
\cite{laurent2000adaptive} Let $X \sim \chi_d^2$ and $\epsilon \ge 0$, then
\begin{align}
P(X - d \ge 2\sqrt{d\epsilon} + 2\epsilon) \le \exp{\left(-\epsilon\right)} \nonumber \\
P(X - d \le -2\sqrt{d\epsilon}) \le \exp{\left(-\epsilon\right)}. \nonumber
\end{align}
\end{lemma}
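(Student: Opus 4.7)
The plan is to use the classical Chernoff / Cramér transform approach, since $X = \sum_{i=1}^d Z_i^2$ with $Z_i$ i.i.d.\ standard normal has an explicit moment generating function $E[e^{\lambda X}] = (1-2\lambda)^{-d/2}$ for $\lambda < 1/2$. Writing the centered version, the cumulant generating function of $X-d$ is
\begin{equation*}
\psi(\lambda) \;=\; \log E\bigl[e^{\lambda(X-d)}\bigr] \;=\; -\lambda d - \tfrac{d}{2}\log(1-2\lambda), \qquad \lambda \in (0, 1/2),
\end{equation*}
and the lower-tail analogue $\log E[e^{-\lambda(X-d)}] = \lambda d - \tfrac{d}{2}\log(1+2\lambda)$ is valid for every $\lambda > 0$. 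Markov's inequality applied to $e^{\pm\lambda(X-d)}$ yields the two one-sided Chernoff bounds and the proof reduces to choosing $\lambda$ well.

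For the upper tail, I would optimize $\psi(\lambda) - \lambda s$ over $(0,1/2)$ by solving the stationarity equation $d\psi'(\lambda) = s$, which gives $\lambda^{\star} = s/(2(d+s))$ and $1-2\lambda^{\star} = d/(d+s)$. Plugging back in yields the closed form
\begin{equation*}
P(X - d \ge s) \;\le\; \exp\!\left(-\tfrac{s}{2} + \tfrac{d}{2}\log\!\left(1 + \tfrac{s}{d}\right)\right).
\end{equation*}
With $s = 2\sqrt{d\epsilon}+2\epsilon$ and the substitution $v = \sqrt{\epsilon/d}$, so $s/d = 2v+2v^{2}$, the target inequality $\tfrac{s}{2} - \tfrac{d}{2}\log(1+s/d) \ge \epsilon$ collapses to the elementary statement $1 + 2v + 2v^{2} \le e^{2v}$ for $v \ge 0$, which follows from the Taylor series of $e^{2v}$ term by term.

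For the lower tail, I would avoid the exact Legendre transform and instead use the convenient bound $\log(1+x) \ge x - x^{2}/2$ for $x \ge 0$, which gives $\log E[e^{-\lambda(X-d)}] \le d\lambda^{2}$. Chernoff then yields $P(X - d \le -s) \le \exp(d\lambda^{2} - \lambda s)$, and the unconstrained minimizer $\lambda = s/(2d)$ produces $\exp(-s^{2}/(4d))$. Setting $s = 2\sqrt{d\epsilon}$ delivers the bound $\exp(-\epsilon)$ directly.

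The only non-mechanical step is the upper-tail calibration: picking the particular linear-plus-square form $s = 2\sqrt{d\epsilon} + 2\epsilon$ so that the Legendre transform $\tfrac{s}{2} - \tfrac{d}{2}\log(1+s/d)$ dominates $\epsilon$ uniformly in $d$ and $\epsilon$. I expect this to be the main, albeit routine, obstacle; everything else is a substitution into the Gaussian MGF and an application of Markov's inequality.
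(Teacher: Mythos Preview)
Your proof is correct. However, the paper does not actually prove this lemma: it is stated in the appendix under ``Useful Tail Bound for Chi-square Variables'' and simply attributed to Laurent and Massart~\cite{laurent2000adaptive} without any argument. So there is no in-paper proof to compare against.

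That said, your argument is essentially the standard one from the cited reference: exploit the explicit chi-square MGF, apply the Cram\'er--Chernoff method, optimize in $\lambda$ for the upper tail, and use the quadratic lower bound $\log(1+x)\ge x-x^2/2$ for the lower tail. The only slight deviation is that Laurent--Massart typically parametrize the upper-tail deviation directly via $t=\epsilon$ and verify the inequality by an equivalent monotonicity argument rather than the Taylor-series comparison $e^{2v}\ge 1+2v+2v^2$, but the two are interchangeable one-line calculus facts. Your calibration and substitutions are clean and complete; nothing is missing.
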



\bibliography{BDML}
\bibliographystyle{ieee}

\end{document}